\newtheorem{theorem}{Theorem} 
\def\tn{\Tilde{n}}
\def\LL{\mathcal{L}}
\def\tS{\tilde{S}}
\def\E{\mathbb{E}}
\definecolor{globalcolor}{RGB}{255, 102, 0}
\definecolor{localcolor}{RGB}{0, 153, 76} 
\definecolor{lightgray}{gray}{0.9}
\definecolor{iccvblue}{rgb}{0.21,0.49,0.74}
\title{Frequency-Semantic Enhanced Variational Autoencoder for Zero-Shot Skeleton-based Action Recognition}
\author{
Wenhan Wu$^{1}$, Zhishuai Guo$^{2}$, Chen Chen$^{3}$, Hongfei Xue$^{1}$, Aidong Lu$^{1}$\\
$^{1}$University of North Carolina at Charlotte, Department of Computer Science\\
$^{2}$Northern Illinois University, Department of Computer Science\\
$^{3}$Center for Research in Computer Vision, University of Central Florida\\
{\tt\small \{wwu25, hongfei.xue, aidong.lu\}@charlotte.edu}, {\tt\small zguo@niu.edu}, {\tt\small chen.chen@crcv.ucf.edu}
}
\begin{document}

\maketitle
\begin{abstract}
Zero-shot skeleton-based action recognition aims to develop models capable of identifying actions beyond the categories encountered during training. Previous approaches have primarily focused on aligning visual and semantic representations but often overlooked the importance of fine-grained action patterns in the semantic space (e.g., the hand movements in drinking water and brushing teeth). To address these limitations, we propose a \textbf{F}requency-\textbf{S}emantic Enhanced \textbf{V}ariational \textbf{A}uto\textbf{e}ncoder (\textbf{FS-VAE}) to explore the skeleton semantic representation learning with frequency decomposition. FS-VAE consists of three key components: 1) a frequency-based enhancement module with high- and low-frequency adjustments to enrich the skeletal semantics learning and improve the robustness of zero-shot action recognition; 2) a semantic-based action description with multilevel alignment to capture both local details and global correspondence, effectively bridging the semantic gap and compensating for the inherent loss of information in skeleton sequences; 3) a calibrated cross-alignment loss that enables valid skeleton-text pairs to counterbalance ambiguous ones, mitigating discrepancies and ambiguities in skeleton and text features, thereby ensuring robust alignment. Evaluations on the benchmarks demonstrate the effectiveness of our approach, validating that frequency-enhanced semantic features enable robust differentiation of visually and semantically similar action clusters, thereby improving zero-shot action recognition. Our project is publicly available at: \textcolor{magenta}{\url{https://github.com/wenhanwu95/FS-VAE}}.
\end{abstract}

\section{Introduction}

\begin{figure}[htp]
\vspace{-5pt}
  \centering
  \includegraphics[width=1.0\linewidth]{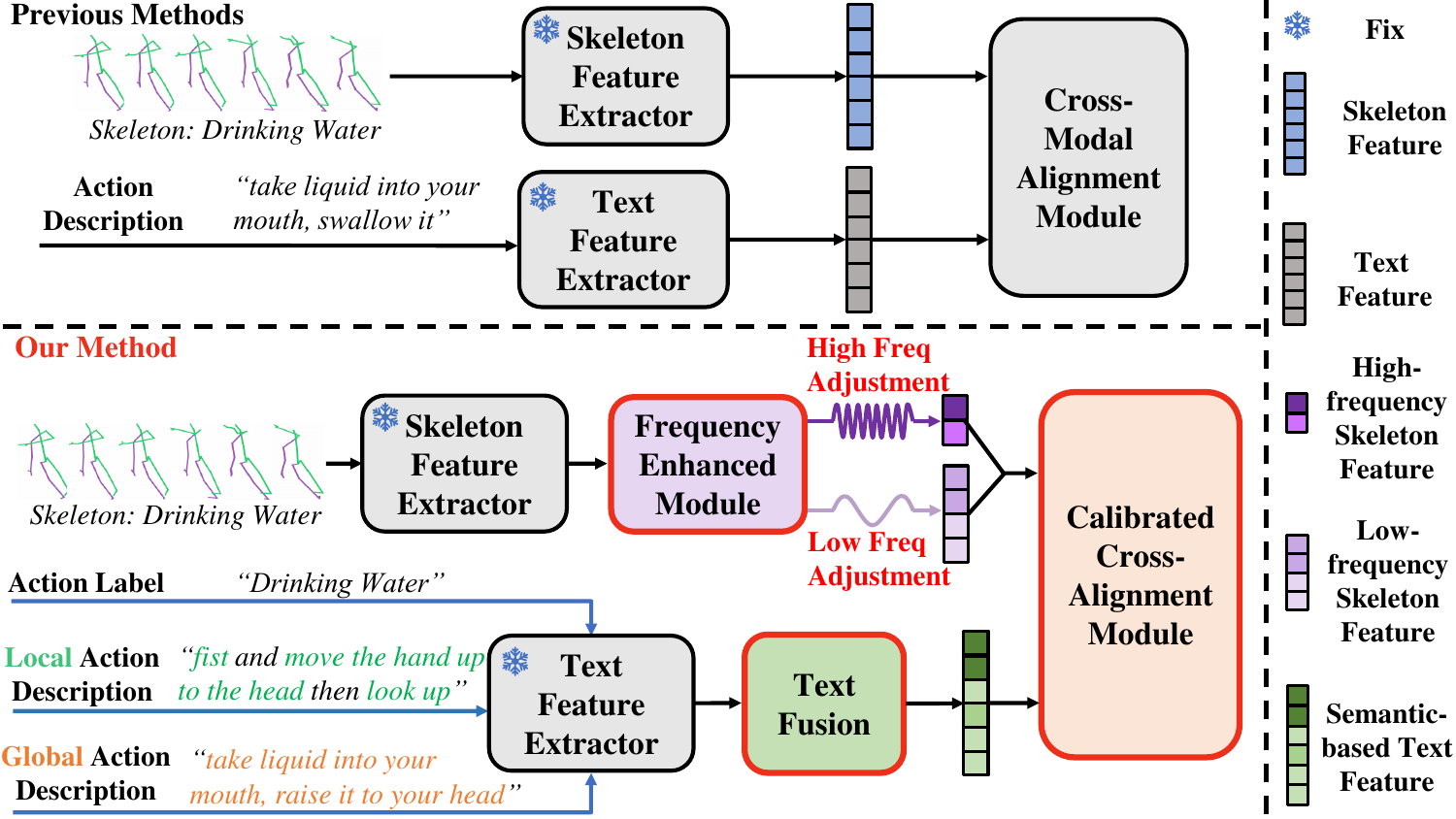}
  \vspace{-15pt}
  \caption{The overall design of \textcolor{red}{our} frequency-semantic enhanced variational autoencoder for zero-shot skeleton action recognition.}
  \label{fig:fig1}
  \vspace{-18pt}
\end{figure}

Human action recognition has gained significant attention in computer vision due to its wide-ranging applications, including surveillance \cite{singh2019multi, shorfuzzaman2021towards}, human-computer interaction \cite{nayak2021human, kashef2021smart}, and automated driving \cite{nayak2021human, kashef2021smart}. Among the various modalities, skeleton-based action recognition stands out for its robustness to environmental variations, as it focuses on 3D structural poses derived from human joints. Although conventional supervised approaches \cite {hou2016skeleton, ke2017new, zhang2017geometric, lee2017ensemble, chi2022infogcn, chen2021channel, liu2023transkeleton, xin2023skeleton, wu2024frequency} have achieved remarkable success in this domain, their reliance on extensive labeled data limits their performance to diverse and unseen action categories. Zero-shot skeleton-based action recognition (ZSSAR) addresses this challenge by enabling models to recognize unseen actions using knowledge from seen action categories and semantic descriptions.

Existing ZSSAR methods \cite{hubert2017learning, gupta2021syntactically, zhou2023zero,li2023multi,chen2024fine, zhu2024part,li2025sa} primarily align skeleton features with text embeddings within a shared latent space to enhance generalization. Specially, \cite{gupta2021syntactically, li2023multi, li2025sa} employ Variational Auto Encoder (VAE) \cite{kingma2013auto} as the training framework to learn structured and generalizable latent representations.
However, these VAE-based approaches often yield less semantic information due to their coarse feature representations. Skeleton sequences, unlike raw videos, lack detailed appearance cues, making it difficult to encode the fine-grained semantics. As a result, critical motions (e.g., nuanced limb or hand movements) are often underrepresented, limiting the models' ability to capture semantic distinctions between similar actions such as ``drinking" and ``eating". Additionally, skeleton data are inherently ambiguous due to occlusions and variations in camera viewpoints, further complicating motion interpretation. Traditional cross-modal alignments often treat all skeleton-text pairs as equally reliable, ignoring the uncertainty in skeleton representations. Since skeleton features can be noisy and ambiguous, rigid alignment \cite{gupta2021syntactically, li2023multi, li2025sa} with text embeddings may lead to misalignment and degraded generalization in zero-shot scenarios.  

Driven by these concerns, we raise two fundamental questions: \textit{\textbf{Q1}: How can skeletal semantics be enriched to enhance the generalization of learned features? \textbf{Q2}: How can cross-modal alignment be improved by effectively leveraging enriched semantics?}
To address these challenges, \textit{we design a framework that enhances action semantics with frequency-based modeling and semantic action descriptions. Additionally, a calibrated cross-modal alignment module is proposed to bridge modality gaps, enabling robust zero-shot recognition of both global and fine-grained patterns.}

\textbf{Firstly}, our approach introduces a Frequency Enhanced Module, which employs the Discrete Cosine Transform (DCT) \cite{ahmed1974discrete} to transform and enhance skeleton motions in the frequency domain. This decomposition enables a structured enhancement strategy, where low-frequency components (overall semantic structure of actions) and high-frequency components (fine-grained motion details) are selectively refined. Specifically, low-frequency coefficients undergo a progressively diminishing amplification effect to strengthen the global motion representation without distorting structural integrity. Meanwhile, high-frequency coefficients are subjected to an adaptive attenuation mechanism that gradually reduces their magnitude without excessive suppression. This adjustment allows us to preserve subtle actions, such as limb movements and micro-gestures, and simultaneously mitigates the influence of high-frequency noise caused by skeletal jitter. The enhancement not only enriches the semantic representations but also improves the model's robustness against noise and irrelevant variations.


\textbf{Secondly}, our framework incorporates a Semantic-based action Description (SD) mechanism to generate embeddings that capture both localized and global action semantics, enhancing cross-modal alignment in ZSSAR. This allows the model to leverage semantic consistency for recognizing unseen actions without direct supervision. The SD consists of Local action Description (LD), which encodes fine-grained motion details, and Global action Description (GD), which represents overall body posture and movement patterns. For example, LD highlights hand movement in ``drinking water'', while GD captures body coordination and the sequential flow of motion. This structured description ensures that both detailed actions and contextual motion are well-represented, enabling a precise alignment with frequency-enhanced skeleton features. 

\textbf{Thirdly}, a calibrated cross-alignment loss is proposed for semantic embeddings with frequency-enhanced skeleton features, addressing modality gaps and skeleton ambiguities in ZSSAR tasks. It minimizes the disparity between true skeleton-semantic pairs while mitigating mismatched pairs. Unlike conventional uniform alignment losses, the calibrated loss employs a sigmoid-based distance measure to dynamically balance contributions from positive and negative pairs, ensuring robust learning even in the presence of noisy or ambiguous skeleton data. By integrating the frequency-enhanced structure of skeleton and text features, the loss further reinforces cross-modal correspondence. Actions with overlapping semantics, such as ``drinking" and ``eating", can be effectively distinguished by leveraging fine-grained details (e.g., hand trajectories) and global patterns (e.g., body movements). This approach mitigates overfitting to noisy alignments and reduces modality-specific biases, improving generalization to unseen actions in zero-shot scenarios. The overall method of our FS-VAE is illustrated in Fig.~\ref{fig:fig1}, and the contributions are as follows:

\begin{itemize} 
\item We propose a \textbf{Frequency Enhanced Module} that employs Discrete Cosine Transform (DCT) to decompose skeleton motions into high- and low-frequency components, allowing adaptive feature enhancement to improve semantic representation learning in ZSSAR.

\item We introduce a novel \textbf{Semantic-based action Description (SD)}, comprising Local action Description (LD) and Global action Description (GD), to enrich the semantic information for improving the model performance.

\item A \textbf{Calibrated Cross-Alignment Loss} is proposed to address modality gaps and skeleton ambiguities by dynamically balancing positive and negative pair contributions. This loss ensures robust alignment between semantic embeddings and skeleton features, improving the model's generalization to unseen actions in ZSSAR.

\item Extensive experiments on benchmark datasets demonstrate that our framework significantly outperforms state-of-the-art methods, validating its effectiveness and robustness under various seen-unseen split settings.
\end{itemize}

\section{Related Works}
\subsection{Zero-Shot Skeleton Action Recognition}
Traditional ZSSAR methods focus mainly on mapping skeleton features and semantic embeddings into a shared latent space for alignment \cite{hubert2017learning, wray2019fine, gupta2021syntactically, zhou2023zero}. These approaches utilize techniques such as visual-textual correlation learning and adversarial training to reduce the modality gap. Recent works have explored enhanced feature representations and multi-modal alignment strategies to improve performance. For example, \cite{zhu2024part} leverages part-based feature modeling to address prompting and partitioning issues in alignment, while \cite{li2025sa} introduces semantic attention mechanisms to highlight irrelevant and related semantic features. Despite these advances, existing methods often overlook the semantics of frequency-domain features in capturing both fine-grained motions and global action patterns. Moreover, ambiguities in skeleton representations and noisy or mismatched skeleton-text pairs remain significant challenges.

Unlike prior works that focus solely on the spatial and temporal domain, our work differs fundamentally by leveraging frequency decomposition to model and enhance skeleton motions in the frequency domain, capturing both fine-grained motion variations and overarching action structure to provide richer skeletal semantics. Furthermore, our calibrated cross-alignment loss explicitly addresses skeleton-text ambiguities and modality gaps, ensuring robust alignment in zero-shot scenarios. 

\subsection{Skeleton-based Frequency Representation Learning}
Pose-based approaches aim to directly extract motion patterns from human poses for applications such as motion prediction \cite{li2020dynamic}, pose estimation \cite{ zheng20213d}, and action recognition \cite{chi2022infogcn}. These methods rely on representations from the pose space, which naturally encode spatial structural relationships and temporal motion dependencies. However, effectively integrating these spatio-temporal aspects into a unified framework remains a significant challenge.

Recent research has taken advantage of frequency domain transformations to encode temporal information \cite{akhter2008nonrigid} compactly and smoothly. Studies such as \cite{mao2019learning, wang2021multi, li2022skeleton, gao2023decompose} utilize DCT to convert temporal motion signals into the frequency domain, facilitating frequency-specific representation learning. The decomposition of motion signals into high- and low-frequency components enables a fine-grained action analysis, preserving both subtle motion details and global movement patterns. Despite the success of frequency-based modeling, its application in skeleton-based action recognition remains limited and has not yet been explored in the context of zero-shot learning (ZSL). For example, \cite{chang2024wavelet} employed a wavelet transform-based approach to disentangle salient and subtle motion features, targeting fine-grained action recognition. Similarly, \cite{wu2024frequency} proposed a frequency-aware transformer that enhances discriminative feature learning for fully supervised action recognition.

In contrast, our approach pioneers the use of DCT in ZSSAR, leveraging its ability to effectively enhance and redistribute motion signals across frequency coefficients. This ensures a robust representation of global motion patterns while preserving fine-grained movement details without amplifying noise. Additionally, a semantic-based action description further enriches action semantics by capturing both localized and holistic action patterns, bridging the semantic gap between textual and skeletal features.

\section{FS-VAE: Frequency-Semantic Enhanced Variational Autoencoder}
Our goal is to recognize actions from unseen categories using only seen class knowledge and their semantic representations. We adopt a generative VAE framework \cite{gupta2021syntactically, li2023multi, li2025sa} to learn the skeleton and semantic cross-model features, which are used to generate unseen class representations in latent space. To further enhance generalization in ZSL, we propose a frequency-semantic enhanced framework that refines skeletal inputs through frequency decomposition, preserving essential motion patterns while improving alignment with semantic features. Below, we introduce the model details in FS-VAE.

\begin{figure*}[htp]
\vspace{-5pt}
  \centering
  \includegraphics[width=1\linewidth]{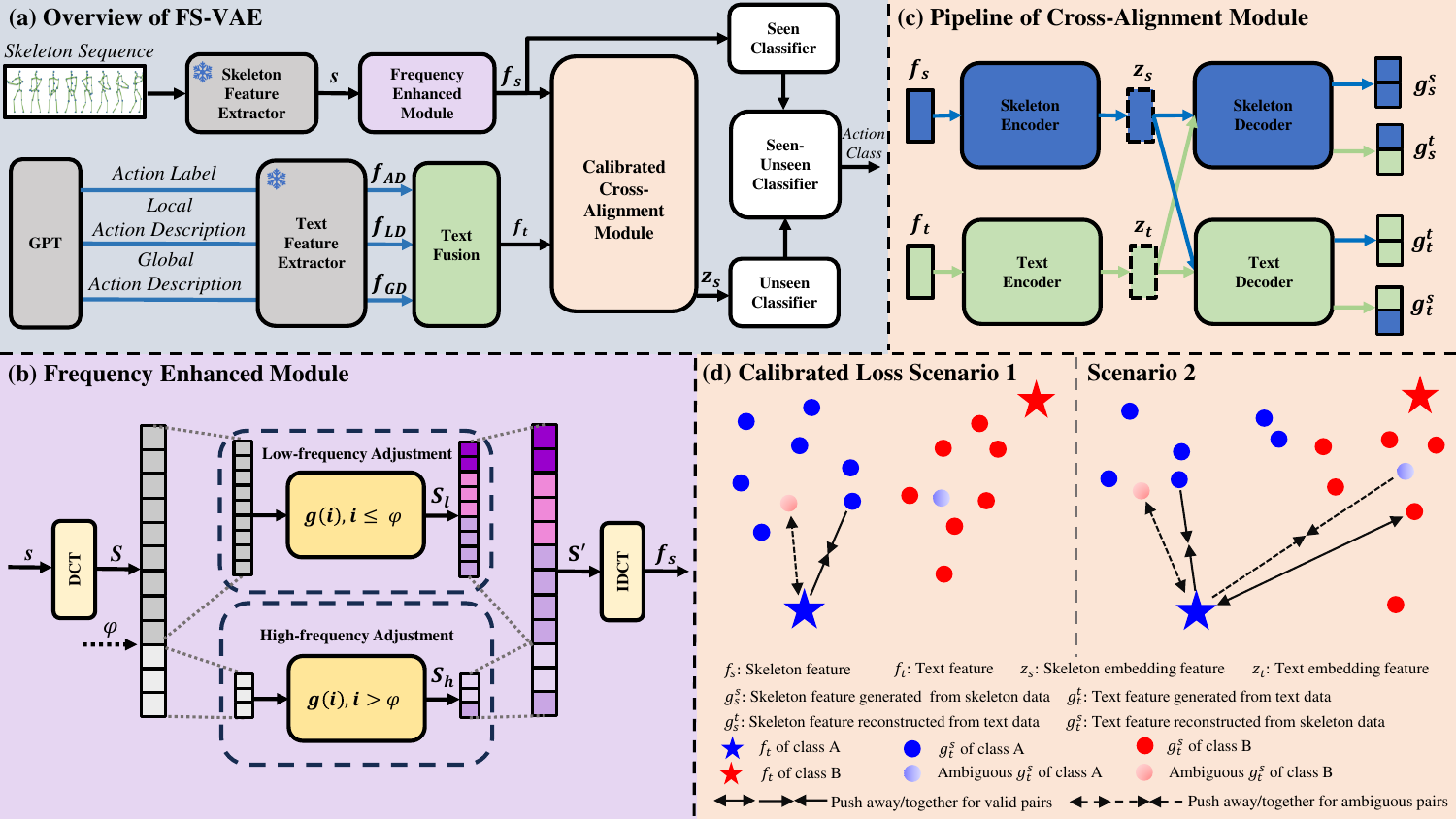}
  \vspace{-15pt}
  \caption{\small{Overview of the proposed FS-VAE. The frequency-enhanced module integrates the global and fine-grained skeleton utilizing the low-frequency and high-frequency adjustments introduced in Section \ref{sec:frequency}. The semantic-based action descriptions, including action labels, local action descriptions, and global action descriptions, are introduced in Section \ref{sec:semantic} to generate comprehensive semantic embeddings for cross-alignment. Moreover, the novel calibrated loss in the cross-alignment module is proposed in Section \ref{sec:cali} for minimizing the disparity between semantic and skeletal features.}}
  \label{fig:fig2}
  \vspace{-16pt}
\end{figure*}
\subsection{Problem Formulation}

Zero-shot skeleton-based action recognition (ZSSAR) aims to classify actions from unseen categories using knowledge from seen categories. A skeleton dataset is represented as \( \mathcal{D} = \{(\mathbf{X}_i, \mathbf{C}_i, \mathbf{A}_i)\}_{i=1}^{N} \), where \( \mathbf{X}_i \in \mathbb{R}^{J \times 3 \times F \times M} \) denotes the skeleton sequence of the \(i\)-th sample, composed of 3D joint coordinates over \(F\) frames for \(J\) joints and \(M\) subjects. The corresponding action category is \(\mathbf{C}_i\), and \( \mathbf{A}_i \) represents the GPT-generated semantic description. The dataset is partitioned into a training set \( \mathcal{D}_{\text{tr}}^s \) with samples of seen categories \( \mathcal{C}_s \), and two disjoint test sets: \( \mathcal{D}_{\text{te}}^u \) for unseen categories \( \mathcal{C}_u \) and \( \mathcal{D}_{\text{te}}^s \) for seen categories. The category sets satisfy \( \mathcal{C} = \mathcal{C}_s \cup \mathcal{C}_u \) and \( \mathcal{C}_s \cap \mathcal{C}_u = \emptyset \), each action category is associated with \( \mathbf{A}_i \).

The objective of ZSSAR is to learn a mapping function \( f: \mathbb{R}^{J \times 3 \times F \times M} \to \mathcal{C}_u \). To achieve this, we align the skeleton feature \( f_s \) extracted from \( \mathbf{X}_i \) with the text feature \( f_t \) obtained from \( \mathbf{A}_i \) in a shared latent space. During training, a feature alignment mechanism enforces the relationship between \( f_s \) and \( f_t \) for seen categories \( \mathcal{C}_s \). Specifically, the skeleton feature \( f_s \) is encoded into a latent distribution \( z_s \), while the text feature \( f_t \) is mapped to another latent distribution \( z_t \). These latent variables \( z_s \) and \( z_t \) serve as a bridge for cross-modal knowledge transfer to ensure the learned representations capture both motion and semantic patterns within the cross-alignment. In Generalized Zero-Shot Skeleton-Based Action Recognition (GZSSAR), the model classifies actions from both \( \mathcal{C}_s \) and \( \mathcal{C}_u \) during testing.

\subsection{Frequency Enhanced Module}
\label{sec:frequency}
\textbf{Motivation for Frequency Enhancement in ZSL.}
In fully supervised learning, frequency-aware models \cite{chang2024wavelet,wu2024frequency}  capture both high- and low-frequency components from labeled data, where high-frequency details are particularly useful for recognizing subtle movements, and low-frequency motions are utilized to capture global movement patterns.  However, the lack of unseen class data prevents the direct learning of class-specific high-frequency distributions in ZSL, making these details more sparse and noisy. To address this, our approach enhances low-frequency components to extract richer semantic information, improving generalization to unseen categories. Meanwhile, we adaptively suppress high-frequency variations to preserve essential fine-grained details while mitigating noise (e.g., skeletal jitter or limb fluctuations). This adjustment reinforces the ZSL training to learn richer skeletal information and more structured semantics compared to conventional purely spatial-temporal modeling \cite{gupta2021syntactically, li2023multi, li2025sa}, which leads to more effective ZSSAR performance. The pipeline of the enhanced module is illustrated in Fig. \ref{fig:fig2} (b).

\textbf{Frequency Division Formula.}
Let \( \mathbf{s} \in \mathbb{R}^{J \times C \times F} \) denote the input joint sequence, where \( J \) represents the number of joints, \( C \) the coordinate dimension (e.g., $x$, $y$, $z$), and \( F \) the number of frames. The trajectory of the $j$-th joint across $T$ frames is denoted as \( T_j = (t_{j,1}, t_{j,2}, \ldots, t_{j,F}) \).
We apply the DCT \cite{mao2019learning, zhao2023poseformerv2, wu2024frequency} to obtain the frequency-domain representation for skeleton sequence: $\mathbf{S} = \text{DCT}(\mathbf{s})$, the DCT decomposes the input skeleton sequence \textbf{s} into frequency components, producing the transformed representation 
\textbf{S} of the same length as the input. Each component in \textbf{S} corresponds to a specific frequency coefficient, where lower-indexed coefficients represent low-frequency (global) motion patterns, and higher-indexed coefficients capture high-frequency (fine-grained) details. For the trajectory \( T_j \), the $i$-th DCT coefficient to each individual trajectory is calculated as:
\begin{equation}
\resizebox{0.85\hsize}{!}{$
C_{j,i} = \sqrt{\frac{2}{F}} \sum_{f=1}^{F} t_{j,f} \frac{1}{\sqrt{1 + \delta_{i1}}} 
\cos\left[\frac{\pi(2f - 1)(i - 1)}{2F}\right]
$}
\label{eq:dct_coeff}
\end{equation}
where the \textit{Kronecker delta} $ \delta_{ij} = 1$ if $i = j$, and $ \delta_{ij} = 0$ otherwise. In particular, \( i \in \{1, 2, \ldots, F \} \), and the larger $i$ corresponds to higher frequency coefficients. These coefficients enable us to represent skeleton motion effectively within the frequency domain by capturing both subtle dynamic details and global motion patterns \cite{wu2024frequency}. 

\textbf{Low-Frequency Adjustment.}  
For the low-frequency range, the adjustment is applied using the piecewise scaling function \( g(i) \):
\begin{equation}
\mathbf{S}_{l} \leftarrow \mathbf{S} \cdot g(i), i \leq \varphi
\end{equation}
where \( g(i) = 1 + w_i\left(1 - \frac{i}{b}\right) \) for low-frequency components,  \( \varphi \) is the low frequency threshold. \( \mathbf{S}_{l} \) represents the low-frequency components, which capture global motion patterns such as large-scale movements of limbs and torso. The term \( b \) is an adjusting parameter intended to make \(g(i)\) decrease gradually within the low-frequency range. The fraction \( \frac{i}{b} \) ensures a progressive reduction in enhancement strength as frequency increases, thereby maintaining the integrity of large-scale global motion. The learned weight \( w_i \) adaptively controls enhancement for different frequencies, amplifying the most distinguishing ones.

\textbf{High-Frequency Adjustment.}  
For the high-frequency range, the scaling function \( g(i) \) is given by:
\begin{equation}
\mathbf{S}_{h} \leftarrow \mathbf{S} \cdot g(i), i > \varphi
\end{equation}
where \( g(i) = 1 - w_i\left(1 - \frac{i - b}{b}\right) \). \( \mathbf{S}_{h} \) represents the high-frequency components of the skeleton features, which captures fine-grained details such as finger, wrist, and rapid limb movements. In high-frequency adjustment, \( b \) serves as a normalization factor that controls the suppression of high-frequency variables. Specifically, it ensures that attenuation decreases (i.e., $g(i)$ increases) smoothly as frequency increases, preventing excessive suppression of fine-grained motion details. By scaling the suppression term proportionally to \( i - b \), this formulation mitigates skeletal noise while preserving essential micro-movements. Meanwhile, \(w_i\) adaptively modulates the suppression strength for each high-frequency component, up-weighting the most distinguishing frequencies.

\textbf{Inverse Transform.}
The adjusted frequency-domain signal is reconstructed into the time domain using the Inverse Discrete Cosine Transform (IDCT), represented as: $f_{s} = \text{IDCT}(\mathbf{S'})$,
where $\mathbf{S'}$ is the frequency-enhanced skeleton component in the frequency domain. The specific restoration for each joint trajectory is given by:
\begin{equation}
\resizebox{0.85\hsize}{!}{$
t_{j,f} = \sqrt{\frac{2}{F}} \sum_{i=1}^{F} C_{j,i} \frac{1}{\sqrt{1 + \delta_{i1}}} 
\cos\left[\frac{\pi(2f - 1)(i - 1)}{2F}\right]
$}
\label{eq:idct}
\end{equation}
where $j \in \{1, 2, \ldots, J \}$ and $f \in \{1, 2, \ldots, F\}$. Here, $t_{j,f}$ represents the restored joint trajectory in the time domain, reconstructed from its frequency-domain coefficients \( C_{j,i} \).

This process integrates enhanced global patterns and preserved fine-grained details, creating a comprehensive representation of the action. Using the enhanced time domain skeleton feature \( f_s \), the model aligns these features with semantic embeddings, enabling robust recognition in zero-shot scenarios. More frequency analysis and method illustration can be found in \textcolor{magenta}{Appendix F}. 

\subsection{Semantic-based Action Description}
\label{sec:semantic}
Unlike the previous ZSL methods that focus on motion descriptions in a temporal way \cite{li2023multi} and focus on the action prompting with GPTs \cite{zhu2024part} that ignores the semantic characteristics, we propose a novel strategy that leverages semantic decomposition and feature alignment to fully capture both the localized details and global semantic structures inherent in human actions. This method stems from the observation that actions can be naturally divided into components reflecting dynamic movements and overarching patterns, enabling a comprehensive and robust representation of action semantics.

The semantic text description consists of the  Action Label (AL) and a Semantic-Based Description (SD). SD is further divided into two complementary components: Local action Description (LD) and Global action Description (GD). We adopt the pre-trained text-encoder of CLIP \cite{radford2021learning} to extract the corresponding semantic features. \(f_{\text{AL}}\) indicates the feature of action label. 
The local component \( f_{\text{LD}} \) captures fine-grained motion details, which are crucial for understanding localized dynamics and specific body-part interactions. For example, in the action of ``drinking water,'' \( f_{\text{LD}} \) describes detailed movements such as ``fist and move the hand up to the head, then look up,'' to emphasize precise body-part motions. In contrast, \( f_{\text{GD}} \) represents the overall movements that provide a high-level overview, such as ``take liquid into your mouth, raise it to your head.'' By integrating \(f_{\text{AL}}\), \( f_{\text{LD}} \) and \( f_{\text{GD}} \), our representation enriches the semantic space, capturing both localized motion details and holistic action patterns. Extra examples of action descriptions and promptings are listed in \textcolor{magenta}{Appendix D}.

To unify these components into a cohesive semantic embedding, we concatenate the descriptions and normalize them as follows:
\begin{equation}
f_{\text{t}} = \frac{\text{Concat}(f_{\text{AL}}, f_{\text{LD}}, f_{\text{GD}})}{\|\text{Concat}(f_{\text{AL}}, f_{\text{LD}}, f_{\text{GD}})\|}
\end{equation}

\subsection{Calibrated Cross-Alignment Loss}
\label{sec:cali}

\textbf{Motivation for Calibrated Loss.}
Text data, especially the semantically rich descriptions we introduce in Section \ref{sec:semantic}, are inherently clean and precise, offering a strong foundation for capturing action nuances.
In zero-shot learning, the text features serve as the bridge between seen and unseen classes, enabling the model to generalize effectively. 
However, when the text encoder is affected by noisy skeleton data, it may struggle to retain these semantic details, leading to suboptimal performance. 

Skeleton-based features (e.g., \(g^s_t\)), on the other hand, are noisy and unreliable for the following reasons. First, skeleton features omit crucial contextual information from the raw video data, including environmental context and fine-grained motion details.
Second, variations in camera angles and viewpoints further exacerbate ambiguities in skeletons. 
Finally, skeletons for actions with similar motion patterns, such as ``drinking water" and ``eating a meal," are inherently difficult to distinguish. 

As a result, rigidly aligning \(g^s_t\) with the text features \(f_t\) may result in poor updates to the text encoder, causing the model to overlook important semantic details in the text data. This misalignment is particularly problematic in zero-shot learning, where the model must generalize to unseen classes based on robust feature representations. 
To this end, we propose the following calibrated alignment loss that enhances the resilience of the text encoder to noise, preserving the quality of learned representations. The illustration is presented in Fig. \ref{fig:fig2} (c) and (d).


\textbf{Definition of Calibrated Loss.}
The calibrated loss adjusts the alignment by encouraging positive text-skeleton pairs to align while penalizing negative pairs, unlike \cite{li2023multi,li2025sa}, which only encourages the alignment of positive pairs. The calibrated loss is defined as:
{\scriptsize
\begin{equation}
\begin{split}
\mathcal{L}_{\text{Align}} = &\frac{\lambda}{B} \sum\limits_{i\in B} \frac{1}{1 + \exp((\|f_t(i) - g^s_t(i^-)\|^2-\|f_t(i) - g^s_t(i)\|^2)/\lambda)} \\
+ &\frac{\lambda}{B} \sum\limits_{i\in B} \frac{1}{1 + \exp((\|f_s(i) - g^t_s(i^-)\|^2-\|f_s(i) - g^t_s(i)\|^2)/\lambda)},
\end{split}
\end{equation}}
where \(i^-\) denotes a negative sample to $i$ in the batch, and $\lambda$ is a temperature parameter that controls the sensitivity of the alignment loss. 


\textbf{Key Scenarios Addressed.}
The calibrated loss is robust to the following scenarios:

1. Mismatched Positive Pair with a Reliable Negative Pair: A mismatched positive pair arises when a skeleton feature (e.g., \(g^s_t(i)\)) is inherently ambiguous and resembles skeletons from other classes. For instance, the skeletal sequence for ``reading" may be similar to ``writing." Aligning such an ambiguous skeleton with the text (\(f_t(i)\)) of its own class (``reading") can degrade the text encoder. This problem can be balanced by introducing a reliable negative pair,  i.e., a reliable negative pair \((f_t(i)\) and \(g^s_t(i^-)\) ((e.g., the text feature for ``reading" and a skeleton from ``writing" that is clearly distinguished from with ``reading")). 

2. Mismatched Positive Pair with Mismatched Negative Pair: The calibrated loss remains robust even when both positive and negative pairs are mismatched. For instance, \(f_t(i)\) and \(g^s_t(i)\) form a mismatched positive pair of text feature for ``reading" but the skeleton feature though labeled ``reading", resembles ``writing". Similarly, \(f_t(i)\) and \(g^s_t(i^-)\) form a mismatched negative pair where the text feature represents ``reading" and a skeleton is labeled ``writing" but inherently similar to ``reading". The calibrated loss leverages correctly aligned pairs in the dataset to counteract these inconsistencies.
This robustness stems from the symmetric property of the sigmoid function, i.e., \(\ell(a) + \ell(-a) = 1\), which has been utilized to handle noisy labels \cite{charoenphakdee2019symmetric,guo2023fedxl}.
In our case, let $a = (\|f_t(i) - g^s_t(i^-)\|^2-\|f_t(i) - g^s_t(i)\|^2)/\lambda$ represent a term where both pairs are mismatched. Correctly aligned pairs in the data set may contribute a corresponding term $-a$, leading to a natural counterbalance due to the symmetry of $\ell(\cdot)$. 
For non-symmetric losses, i.e., where $\ell(a) + \ell(-a)$ is not a constant, even a correctly aligned term cannot fully offset an incorrect one. 
A detailed analysis of the calibrated loss is provided in 
\textcolor{magenta}{Appendix E}. Notably, while our calibrated loss shares similarity with triplet losses \cite{schroff2015facenet,hermans2017defense,dong2018triplet,hoffer2015deep,do2019theoretically,kumar2016learning,ge2018deep}, most of these do not satisfy the symmetric property and are, therefore, less robust to noisy scenarios.  In \textcolor{magenta}{Appendix E}, we construct various alignment losses based on triplet loss and present experiments that demonstrate the advantages of our calibrated loss.
 
\textbf{Overall Loss.}
Following the literature on variational autoencoder (VAE)-based architecture for skeleton recognition \cite{li2023multi,gupta2021syntactically}, we use the evidence lower bound loss (ELBO) for reconstruction:
\begin{equation}
\begin{aligned}
\mathcal{L}_{\text{VAE}}^s = \ & \mathbb{E}_{q_\phi(z_s|f_s)}[\log p_\theta(f_s|z_s)] \\
& - \beta D_{KL}(q_\phi(z_s|f_s) \| p_\theta(z_s|f_s)),
\end{aligned}
\end{equation}
where $p_\theta(\cdot)$ and $q_\phi(\cdot)$ represent the likelihood and the prior, respectively. $\beta$ is a hyperparameter that controls the balance between reconstruction and regularization. $q_\phi(z_s|f_s)$ follows the multivariate Gaussian distribution $\mathcal{N}_s(\mu_s, \Sigma_s)$. $L_{VAE}^t$ is symmetric to $L_{VAE}^s$. And thus, the VAE loss is $L_{VAE} = L_{VAE}^s + L_{VAE}^t$. 
The overall loss is
\begin{equation}
    \mathcal{L}_{VAE}^{cali} =  \mathcal{L}_{VAE} + \alpha \mathcal{L}_{Align},
\end{equation}
where $\alpha$ adjusts the trade-off between the VAE loss and the alignment loss. 

\begin{table}[b]
\scriptsize
\renewcommand\arraystretch{1.1}
\centering
\vspace{-10pt}
\caption{Zero-Shot Learning Results. The highest values are highlighted in red, while the second-highest values (from other works) are marked in blue. \textcolor{red}{↑} indicates the improvement over the second-highest value. * indicates the reproduced results of the released codes. $^\dag$ denotes the use of only \(w_i\) for frequency coefficients.}
\vspace{-5pt}
\setlength\tabcolsep{3.0pt} 
{
\begin{tabular}{c|c|cc|cc}
\hline
\multirow{2}{*}{Methods} & \multirow{2}{*}{Venue} & \multicolumn{2}{c|}{NTU-60 (ACC,\%)} & \multicolumn{2}{c}{NTU-120 (ACC,\%)} \\ \cline{3-6} 
 &  & \multicolumn{1}{c|}{55/5 split} & 48/12 split & \multicolumn{1}{c|}{110/10 split} & 96/24 split \\ \hline
ReViSE\cite{hubert2017learning} & ICCV2017 & \multicolumn{1}{c|}{53.9} & 17.5 & \multicolumn{1}{c|}{55.0} & 32.4 \\
JPoSE\cite{wray2019fine} & ICCV2019 & \multicolumn{1}{c|}{64.8} & 28.8 & \multicolumn{1}{c|}{51.9} & 32.4 \\
CADA-VAE\cite{schonfeld2019generalized} & CVPR2019 & \multicolumn{1}{c|}{76.8} & 29.0 & \multicolumn{1}{c|}{59.5} & 35.8 \\
SynSE\cite{gupta2021syntactically} & ICIP2021 & \multicolumn{1}{c|}{75.8} & 33.3 & \multicolumn{1}{c|}{62.7} & 38.7 \\
SMIE\cite{zhou2023zero} & ACMM2023 & \multicolumn{1}{c|}{78.0} & 40.2 & \multicolumn{1}{c|}{61.3} & 42.3 \\
STAR\cite{chen2024fine} & ACMM2024 & \multicolumn{1}{c|}{81.4} & 45.1 & \multicolumn{1}{c|}{63.3} & 44.3 \\
GZSSAR*\cite{li2023multi} & ICIG2023 & \multicolumn{1}{c|}{\textcolor{blue}{83.3}} & \textcolor{blue}{49.8} & \multicolumn{1}{c|}{\textcolor{blue}{72.0}} & \textcolor{blue}{60.7} \\
PURLS\cite{zhu2024part} & CVPR2024 & \multicolumn{1}{c|}{79.2} & 41.0 & \multicolumn{1}{c|}{\textcolor{blue}{72.0}} & 52.0 \\
SA-DVAE\cite{li2025sa} & ECCV2024 & \multicolumn{1}{c|}{82.4} & 41.4 & \multicolumn{1}{c|}{68.8} & 46.1 \\ \hline
Ours$^{\dag}$ & \textbf{\textbackslash{}} & 
\multicolumn{1}{c|}{84.2} & 
52.6 & 
\multicolumn{1}{c|}{71.2} & 
61.9 \\ 
\rowcolor{gray!30}
\textbf{Ours} & \textbf{\textbackslash{}} & 
\multicolumn{1}{c|}{\textbf{\textcolor{red}{86.9}}$_{\textcolor{red}{\uparrow 3.6}}$} & 
\textbf{\textcolor{red}{57.2}}$_{\textcolor{red}{\uparrow 7.4}}$ & 
\multicolumn{1}{c|}{\textbf{\textcolor{red}{74.4}}$_{\textcolor{red}{\uparrow 2.4}}$} & 
\textbf{\textcolor{red}{62.5}}$_{\textcolor{red}{\uparrow 1.8}}$ \\ \hline
\end{tabular}
}
\label{tab: zsl results}
\vspace{-15pt}
\end{table}

\begin{table*}[t]
\scriptsize
\renewcommand\arraystretch{1.0}
\centering
  \caption{Generalized Zero-Shot Learning Results. The highest values are highlighted in red, and the second-highest values (from other works) are marked in blue. H represents the harmonic mean. The result analysis is presented in Section \ref{sec: sota}.}
  \vspace{-5pt}
   \setlength\tabcolsep{5.5pt} 
{
\begin{tabular}{c|c|c>{\columncolor{lightgray}}c>{\columncolor{gray!30}}c|c>{\columncolor{lightgray}}c>{\columncolor{gray!30}}c|c>{\columncolor{lightgray}}c>{\columncolor{gray!30}}c|c>{\columncolor{lightgray}}c>{\columncolor{gray!30}}c}
\hline
\multirow{2}{*}{Methods} & \multirow{2}{*}{Venue} & \multicolumn{3}{c|}{NTU-60 (55/5 split)} & \multicolumn{3}{c|}{NTU-60 (48/12 split)} & \multicolumn{3}{c|}{NTU-120 (110/10 split)} & \multicolumn{3}{c}{NTU-120 (96/24 split)} \\ \cline{3-14} 
 &  & Seen & \cellcolor{lightgray}Unseen & \cellcolor{gray!30}H & Seen & \cellcolor{lightgray}Unseen & \cellcolor{gray!30}H & Seen & \cellcolor{lightgray}Unseen & \cellcolor{gray!30}H & Seen & \cellcolor{lightgray}Unseen & \cellcolor{gray!30}H \\ \hline
ReViSE\cite{hubert2017learning} & ICCV 2017 & 74.2 & 34.7 & 29.2 & 62.4 & 20.8 & 31.2 & 48.7 & 44.8 & 46.7 & 49.7 & 25.1 & 33.3 \\
JPoSE\cite{wray2019fine} & ICCV 2019 & 64.4 & 50.3 & 56.5 & 60.5 & 20.6 & 30.8 & 47.7 & 46.4 & 47.0 & 38.6 & 22.8 & 28.7 \\
CADA-VAE\cite{schonfeld2019generalized} & CVPR 2019 & 69.4 & 61.8 & 65.4 & 51.3 & 27.0 & 35.4 & 47.2 & 19.8 & 48.4 & 41.1 & 34.1 & 37.3 \\
SynSE\cite{gupta2021syntactically} & ICIP2021 & 61.3 & 56.9 & 59.0 & 52.2 & 27.9 & 36.3 & 52.5 & 57.6 & 54.9 & 56.4 & 32.2 & 41.0 \\
STAR\cite{chen2024fine} & ACMM2024 & 69.0 & 69.9 & \textcolor{blue}{69.4} & 62.7 & 37.0 & 46.6 & 59.9 & 52.7 & 56.1 & 51.2 & 36.9 & 42.9 \\
GZSSAR*\cite{li2023multi} & ICIG2023 & 66.8 & 70.7 & 68.7 & 54.8 & \textcolor{blue}{41.4} & \textcolor{blue}{47.1} & 58.1 & 57.8 & 58.0 & 59.2 & \textcolor{blue}{45.9} & \textcolor{blue}{51.7} \\
SA-DVAE\cite{li2025sa} & ECCV2024 & 62.3 & \textcolor{blue}{70.8} & 66.3 & 50.2 & 36.9 & 42.6 & 61.1 & \textcolor{blue}{59.8} & \textcolor{blue}{60.4} & 58.8 & 35.8 & 44.5 \\ \hline
Ours$^{\dag}$ & \textbackslash{} & 76.4 & 61.9 & 68.4 & 57.4 & 43.5 & 49.5 & 55.7 & 66.8 & 60.7 & 58.7 & 48.3 & 53.0  \\ 
\textbf{Ours} & \textbackslash{} & 77.0 & \textbf{\textcolor{red}{74.5}}$_{\textcolor{red}{\uparrow 3.7}}$ & 
\textbf{\textcolor{red}{75.7}}$_{\textcolor{red}{\uparrow 6.3}}$ & 56.2 & 
\textbf{\textcolor{red}{48.6}}$_{\textcolor{red}{\uparrow 7.2}}$ & 
\textbf{\textcolor{red}{52.1}}$_{\textcolor{red}{\uparrow 5.0}}$ & 59.2 & 
\textbf{\textcolor{red}{67.9}}$_{\textcolor{red}{\uparrow 8.1}}$ & 
\textbf{\textcolor{red}{63.3}}$_{\textcolor{red}{\uparrow 2.9}}$ & 57.8 & 
\textbf{\textcolor{red}{51.9}}$_{\textcolor{red}{\uparrow 6.0}}$ & 
\textbf{\textcolor{red}{54.7}}$_{\textcolor{red}{\uparrow 3.0}}$ \\ \hline
\end{tabular}
}
\label{tab: gzsl results}
\vspace{-15pt}
\end{table*}
\section{Experiments}

\subsection{Implementation Details}
\label{sec:implementation}
Our work mainly follows \cite{gupta2021syntactically} for data preprocessing. The data split strategy follows \cite{gupta2021syntactically, li2023multi}. In ZSL, a 55/5 split means training on 55 seen classes and testing on 5 unseen classes. In Generalized Zero-Shot Learning (GZSL), training remains the same, but testing includes both the 55 seen and 5 unseen classes.
We adopt Shift-GCN \cite{cheng2020skeleton} as the skeleton extractor. Meanwhile, GPT-4 \cite{achiam2023gpt} is utilized to generate action descriptions. More settings can be found in \textcolor{magenta}{Appendix B}, and please refer to \textcolor{magenta}{Appendix C} for additional experiments, including results on the PKU-MMD dataset \cite{liu2017pku}.

\subsection{Comparisons with State-of-the-Art Methods}
\label{sec: sota}
To evaluate the effectiveness of our approach, we compare it with state-of-the-art methods under both ZSL and GZSL settings. The results on NTU-60 \cite{shahroudy2016ntu} and NTU-120 \cite{liu2019ntu} datasets, following established split protocols \cite{gupta2021syntactically, li2023multi}, are summarized in Tables \ref{tab: zsl results} and \ref{tab: gzsl results}. Our model consistently achieves the highest accuracy in ZSL, demonstrating strong generalization to unseen actions. In GZSL, it outperforms existing methods, leading to the highest harmonic mean score\cite{gupta2021syntactically} (H-score, $H = \frac{2 \times \text{Seen} \times \text{Unseen}}{\text{Seen} + \text{Unseen}}
$), highlighting the effectiveness of FS-VAE in learning a semantic-enhanced and well-calibrated representation. Notably, we focus on unseen accuracy and the H-score as they best reflect generalization. Unseen accuracy measures the model’s ability to classify novel actions, while H-score balances seen and unseen performance to prevent biased predictions.


\subsection{Ablation Study}

\textbf{Influence of Different Modules.}
Table \ref{tab: Different Modules} highlights the impact of each key component in our framework, including the Semantic-based Descriptions (SD), Frequency-enhanced Module (FM), and Calibrated Loss (CL). Adding SD improves the accuracy of the baseline to 85.4\% in the NTU-60 dataset (e.g., 55/5 split), emphasizing the importance of semantic enrichment. Similarly, integrating FM or CL individually achieves 85.8\% and 84.4\%, respectively, demonstrating their individual contributions to frequency-specific feature learning and robust alignment. Combining all three components leads to the highest performance of 86.9\%, which confirms their complementary effects in addressing the ZSSAR challenges.
\begin{table}[]
\scriptsize
\renewcommand\arraystretch{1.0}
\centering
\caption{Influence of different modules. Semantic-based action Descriptions (SD), Frequency-enhanced Module (FM), Calibrated Loss (CL).}
\vspace{-5pt}
\setlength\tabcolsep{6.0pt} 
{
\begin{tabular}{ccccccclll}
\cline{1-7}
\multicolumn{3}{c|}{Modules} & \multicolumn{2}{c|}{NTU-60 (ACC,\%)} & \multicolumn{2}{c}{NTU-120 (ACC,\%)} &  &  &  \\ \cline{1-7}
SD & FM & \multicolumn{1}{c|}{CL} & 55/5 split & \multicolumn{1}{c|}{48/12 split} & 110/10 split & 96/24 split &  &  &  \\ \cline{1-7}
\ding{55} & \ding{55} & \multicolumn{1}{c|}{\ding{55}} & 83.3 & \multicolumn{1}{c|}{49.8} & 72.0 & 60.7 &  &  &  \\
\ding{51} & \ding{55} & \multicolumn{1}{c|}{\ding{55}} & 85.4 & \multicolumn{1}{c|}{52.7} & 73.0 & 61.3 &  &  &  \\
\ding{55} & \ding{51} & \multicolumn{1}{c|}{\ding{55}} & 85.8 & \multicolumn{1}{c|}{53.1} & 74.0 & 61.8 &  &  &  \\
\ding{55} & \ding{55} & \multicolumn{1}{c|}{\ding{51}} & 84.4 & \multicolumn{1}{c|}{54.1} & 72.8 & 60.0 &  &  &  \\
\ding{51} & \ding{51} & \multicolumn{1}{c|}{\ding{51}} & \textbf{86.9} & \multicolumn{1}{c|}{\textbf{57.2}} & \textbf{74.4} & \textbf{62.5} &  &  &  \\ \cline{1-7}
\end{tabular}
}
\label{tab: Different Modules}
\vspace{-15pt}
\end{table}

\begin{table}[]
\scriptsize
\renewcommand\arraystretch{1.0}
\centering
\caption{Influence of different text descriptions. Action Label (AL), Local action Description (LD), and Global action Description (GD).}
\vspace{-5pt}
\setlength\tabcolsep{6.0pt} 
{
\begin{tabular}{ccccccclll}
\cline{1-7}
\multicolumn{3}{c|}{Descriptions} & \multicolumn{2}{c|}{NTU-60 (ACC,\%)} & \multicolumn{2}{c}{NTU-120 (ACC,\%)} &  &  &  \\ \cline{1-7}
AL & LD & \multicolumn{1}{c|}{GD} & 55/5 split & \multicolumn{1}{c|}{48/12 split} & 110/10 split & 96/24 split &  &  &  \\ \cline{1-7}
\ding{51} & \ding{55} & \multicolumn{1}{c|}{\ding{55}} & 81.9 & \multicolumn{1}{c|}{38.7} & 70.3 & 47.3 &  &  &  \\
\ding{55} & \ding{51} & \multicolumn{1}{c|}{\ding{55}} & 79.3 & \multicolumn{1}{c|}{43.8} & 54.5 & 45.7 &  &  &  \\
\ding{55} & \ding{55} & \multicolumn{1}{c|}{\ding{51}} & 82.0 & \multicolumn{1}{c|}{48.6} & 64.7 & 59.2 &  &  &  \\
\ding{55} & \ding{51} & \multicolumn{1}{c|}{\ding{51}} & 83.7 & \multicolumn{1}{c|}{54.1} & 57.5 & 46.7 &  &  &  \\
\ding{51} & \ding{51} & \multicolumn{1}{c|}{\ding{51}} & \textbf{86.9} & \multicolumn{1}{c|}{\textbf{57.2}} & \textbf{74.4} & \textbf{62.5} &  &  &  \\ \cline{1-7}
\end{tabular}
}

\label{tab: text descriptions}
\vspace{-20pt}
\end{table}

\textbf{Influence of Different Text Descriptions.}
Table \ref{tab: text descriptions} presents the influence of different text descriptions, including Action Label (AL), Local action Description (LD), and Global action Description (GD). Using AL alone achieves an accuracy of 81.9\% on the NTU-60 dataset (55/5 split), showing its fundamental role in providing basic semantic information. Incorporating LD or GD results in 79.3\% and 82.0\%, respectively, suggesting that GD contributes more to performance as it provides more comprehensive semantics of the overall action. Combining LD and GD boosts the performance to 83.7\%, highlighting the synergy between these two semantic-aware features. Integrating all three components achieves the highest accuracy of 86.9\%, demonstrating their complementary contributions to enriching semantic embeddings in ZSSAR.

\begin{figure*}[t]
    \centering
        \begin{minipage}{\textwidth}
            \centering
            \begin{subfigure}[t]{0.49\linewidth} 
                \centering
                \includegraphics[width=\linewidth]{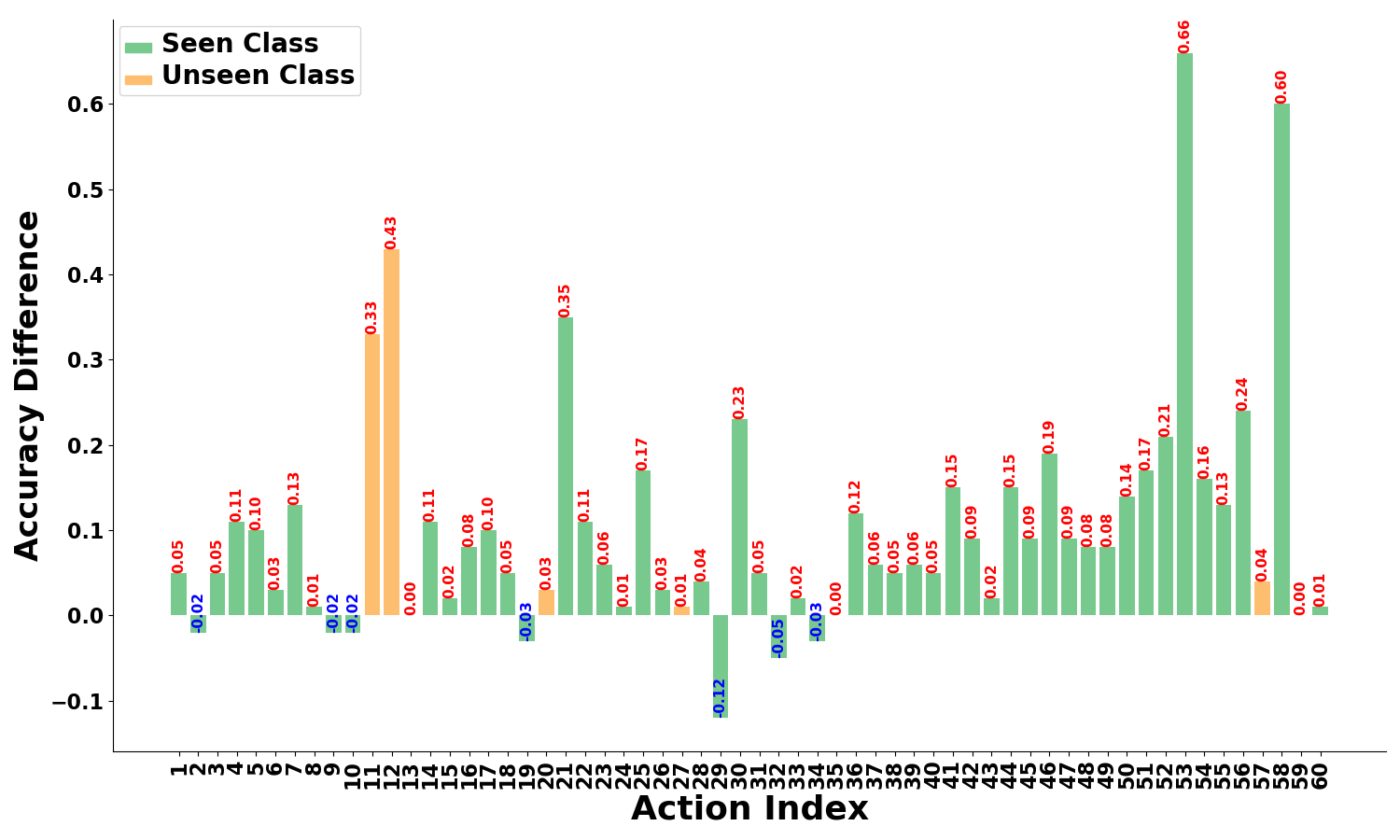}
                \caption{Accuracy difference}
                \label{fig:all_diff}
            \end{subfigure}
            \hfill
            \begin{subfigure}[t]{0.25\linewidth} 
                \centering
                \includegraphics[width=\linewidth]{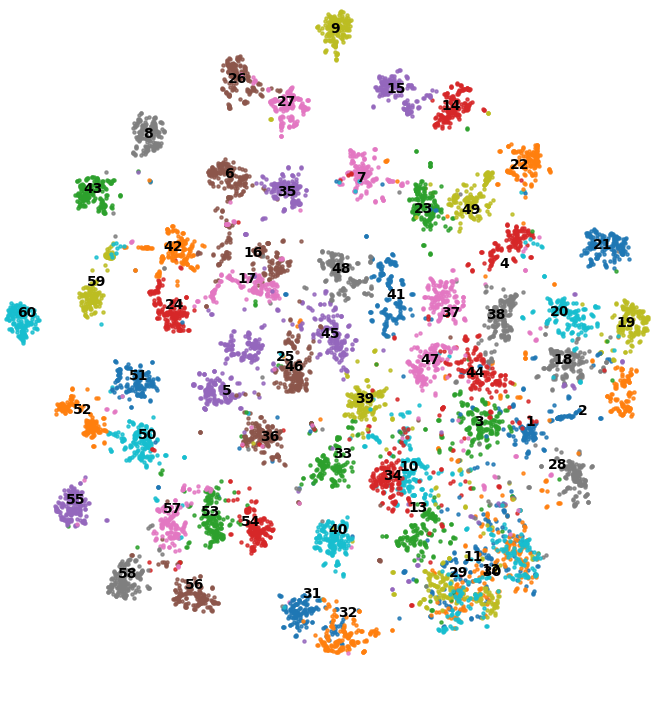}
                \caption{t-SNE for baseline \cite{li2023multi}}
                \label{fig:tsne_baseline}
            \end{subfigure}
            \hfill
            \begin{subfigure}[t]{0.25\linewidth} 
                \centering
                \includegraphics[width=\linewidth]{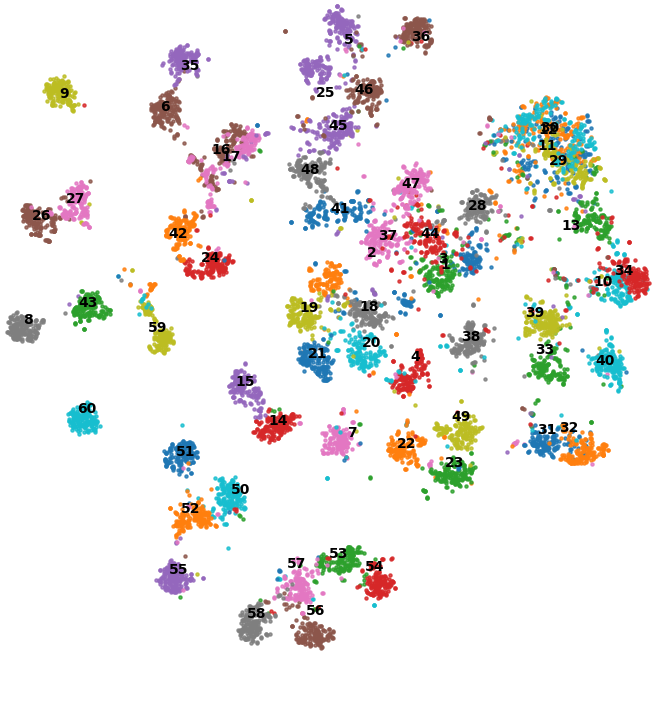}
                \caption{t-SNE for our method}
                \label{fig:tsne_ours}
            \end{subfigure}
        \end{minipage}
    \vspace{-5pt}
    \caption{(a) shows the accuracy difference for seen-unseen actions compared to baseline \cite{li2023multi} under the NTU-60 55/5 split, where the outperforming accuracies are marked in red, and others are in blue.
    (b) and (c) depict the t-SNE visualizations, the corresponding action indices (listed in \textcolor{magenta}{Appendix G}) are labeled in the clusters. Best viewed by zooming in.}
    \label{fig:main}
    \vspace{-15pt}
\end{figure*}

\textbf{Influence of \(\varphi\) and \(b\) in Frequency Enhanced Module.}  
In Fig. \ref{fig:fig4}(a)-(b), we evaluate the impact of various hyperparameter settings of our frequency-enhanced module. The analysis reveals that \(\varphi\) and \(b\) play a crucial role in determining the overall accuracy of our method. The optimal configuration is achieved with \(\varphi = 35\) and \(b = 30\). These values effectively balance feature enhancement of both global action structures and fine-grained motion details. 
\(\varphi\) controls the separation between low- and high-frequency components, ensuring that structural semantics are preserved while capturing subtle motion variations. Meanwhile, \(b\) determines the intensity of enhancement, preventing the amplification of noise while enhancing the model's ability of discriminative representation learning. 


\begin{figure}[ht!]
  \centering
  \begin{subfigure}[b]{0.49\linewidth}
    \centering
    \includegraphics[width=\linewidth]{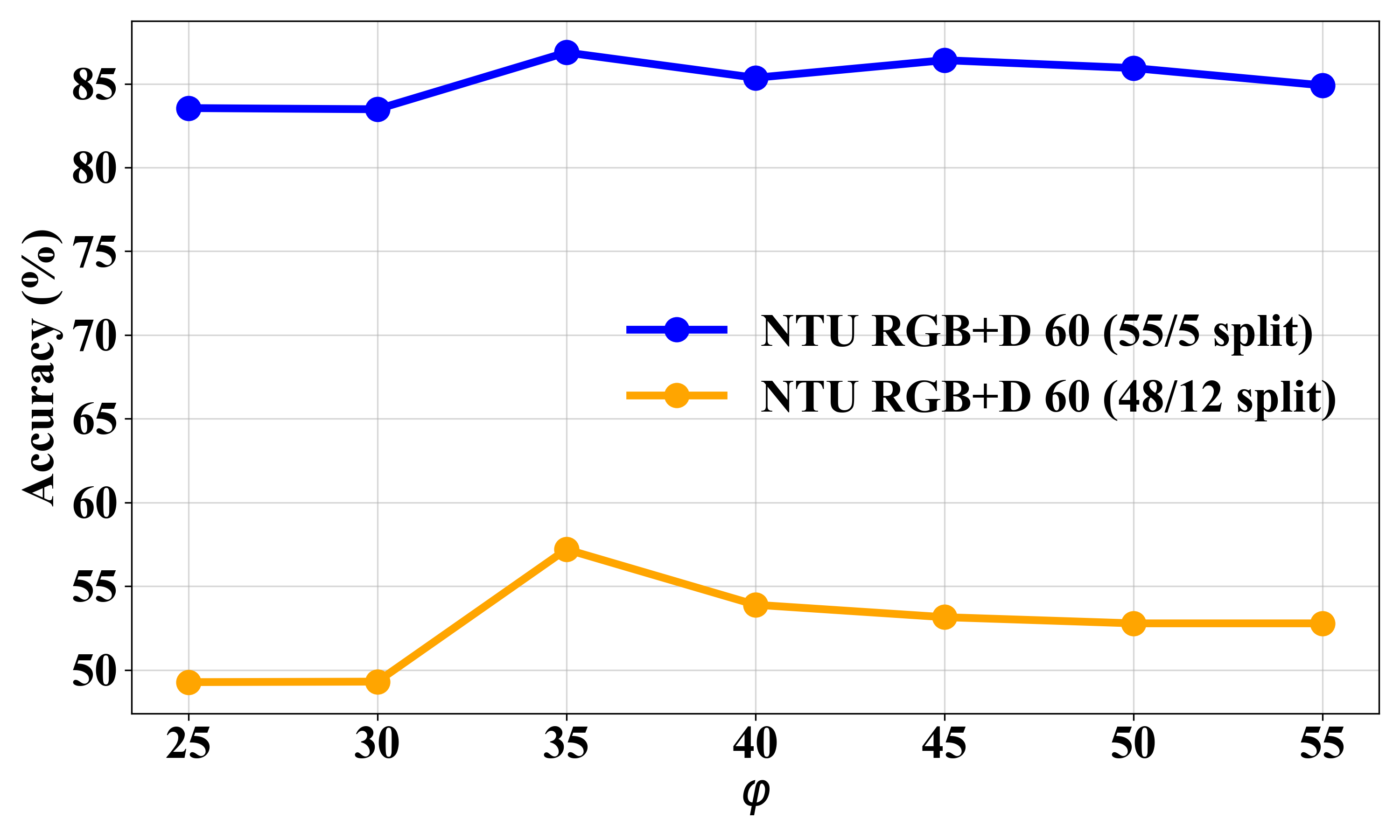}
    \caption{Influence of $\varphi$ on NTU-60 split}
    \label{fig:fig4_phi}
  \end{subfigure}
  \hfill
  \begin{subfigure}[b]{0.49\linewidth}
    \centering
    \includegraphics[width=\linewidth]{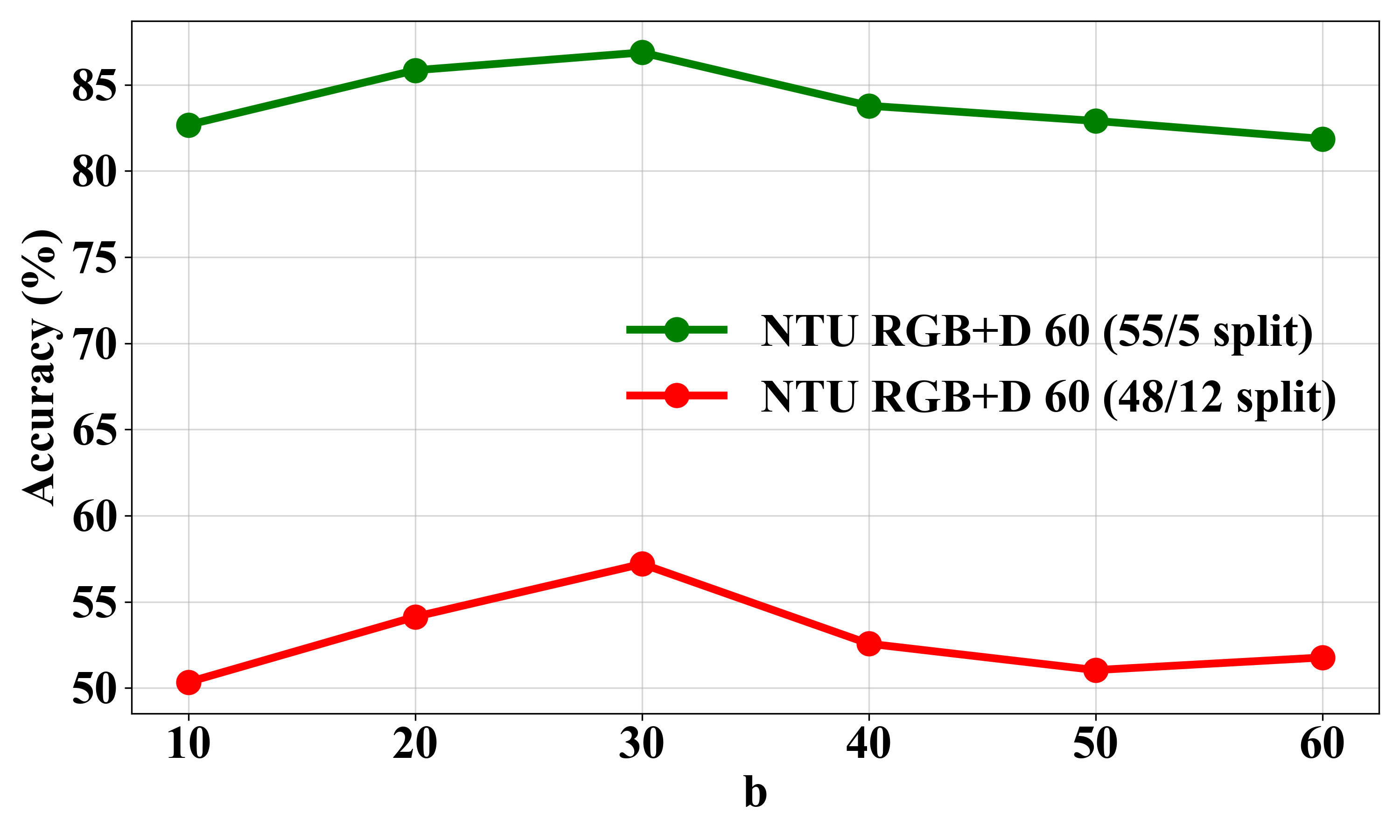}
    \caption{Influence of $b$ on NTU-60 split}
    \label{fig:fig4_b}
  \end{subfigure}
     \vspace{-5pt}
  \caption{Influence of $\varphi$ and $b$ in frequency enhanced module.}
  \label{fig:fig4}
  \vspace{-15pt}
\end{figure}

\textbf{Influence of $\alpha$ and $\lambda$ for Calibrated Loss.}
Recall that \(\alpha\) balances the reconstruction loss and alignment loss, playing a crucial role in ensuring that both losses contribute effectively to the overall objective. Meanwhile, $\lambda$ controls the sensitivity of the alignment loss, with smaller values making it more responsive to misalignments. Specifically, a small 
$\lambda$ places greater emphasis on larger misalignments compared to a large 
$\lambda$, which reduces this sensitivity, making the alignment loss less responsive to smaller misalignments. 
Notably, the alignment loss retains its symmetric property regardless of the choice of \(\lambda\).
As shown in Table \ref{tab:alpha_lambda_results}, the analysis reveals that the optimal combination of parameters is $\alpha = 0.1$ and $\lambda = 100$ to yield the best performance.

\textbf{Impact of Removing Frequency Adjustment.}
In Tables \ref{tab: zsl results} and \ref{tab: gzsl results} (denoted by `Ours$^\dag$'), we also evaluate the impact of removing explicit frequency adjustment and replacing it with purely learnable frequency weight:
$\mathbf{S} \leftarrow \mathbf{S} \cdot g(i)$,
where \( g(i) =  w_i\). \(w_i\) is the learnable weight applied directly to all frequency components.  
The performance drop highlights two key issues: (1) Without explicit frequency adjustment, the model fails to balance global structural patterns and fine-grained details, leading to weaker semantic alignment and increased sensitivity to noise. (2) Explicit frequency scaling provides a prior-informed enhancement, whereas purely learnable weights rely solely on data-driven optimization.
This often results in inconsistent frequency adjustments across different training samples, leading to overfitting to seen categories and ineffective generalization to unseen actions.

\begin{table}[t]
\vspace{-5pt}
\scriptsize
\renewcommand\arraystretch{1.1}
\centering
\caption{Influence of $\alpha$ and $\lambda$ in calibrated loss.}
\vspace{-5pt}
\setlength\tabcolsep{10pt}
{
\begin{tabular}{cllcccclll}
\cline{1-7}
\multicolumn{3}{c|}{\multirow{2}{*}{$\alpha$}} & \multicolumn{2}{c|}{NTU-60 (ACC,\%)} & \multicolumn{2}{c}{NTU-120 (ACC,\%)} &  &  &  \\ \cline{4-7}
\multicolumn{3}{c|}{} & 55/5 split & \multicolumn{1}{c|}{48/12 split} & 110/10 split & 96/24 split &  &  &  \\ \cline{1-7}
\multicolumn{3}{c|}{0.01} & 81.9 & \multicolumn{1}{c|}{51.8} & 74.1 & 61.8 &  &  &  \\
\multicolumn{3}{c|}{0.05} & 84.0 & \multicolumn{1}{c|}{49.4} & 72.8 & 60.0 &  &  &  \\
\multicolumn{3}{c|}{0.1} & \textbf{86.9} & \multicolumn{1}{c|}{\textbf{57.2}} & \textbf{74.4} & \textbf{62.5} &  &  &  \\
\multicolumn{3}{c|}{0.5} & 85.9 & \multicolumn{1}{c|}{53.7} & 74.0 & 61.8 &  &  &  \\
\multicolumn{3}{c|}{1} & 83.2 & \multicolumn{1}{c|}{51.8} & 72.8 & 61.9 &  &  &  \\
\multicolumn{3}{c|}{2} & 81.0 & \multicolumn{1}{c|}{50.3} & 71.3 & 60.4 &  &  &  \\ \cline{1-7}
\multicolumn{10}{c}{} \\ 
\cline{1-7}
\multicolumn{3}{c|}{\multirow{2}{*}{$\lambda$}} & \multicolumn{2}{c|}{NTU-60 (ACC,\%)} & \multicolumn{2}{c}{NTU-120 (ACC,\%)} &  &  &  \\ \cline{4-7}
\multicolumn{3}{c|}{} & 55/5 split & \multicolumn{1}{c|}{48/12 split} & 110/10 split & 96/24 split &  &  &  \\ \cline{1-7}
\multicolumn{3}{c|}{70} & 84.1 & \multicolumn{1}{c|}{51.9} & 71.7 & 60.7 &  &  &  \\
\multicolumn{3}{c|}{80} & 84.9 & \multicolumn{1}{c|}{53.1} & 73.0 & 61.9 &  &  &  \\
\multicolumn{3}{c|}{90} & 86.0 & \multicolumn{1}{c|}{53.8} & 73.0 & 60.3 &  &  &  \\
\multicolumn{3}{c|}{100} & \textbf{86.9} & \multicolumn{1}{c|}{\textbf{57.2}} & \textbf{74.4} & \textbf{62.5} &  &  &  \\
\multicolumn{3}{c|}{110} & 86.2 & \multicolumn{1}{c|}{53.8} & 74.0 & 61.9 &  &  &  \\
\multicolumn{3}{c|}{120} & 85.7 & \multicolumn{1}{c|}{52.4} & 74.2 & 60.2 &  &  &  \\ \cline{1-7}
\end{tabular}
}
\label{tab:alpha_lambda_results}
\vspace{-15pt}
\end{table}



\subsection{Qualitative Analysis}

We present the accuracy difference results compared to the baseline method for the NTU-60 55/5 split in Fig. \ref{fig:all_diff}. Our approach consistently outperforms the baseline across both seen and unseen actions. The results highlight our method's effectiveness in not only enhancing overall accuracy but also improving recognition across most discriminative actions. Notably, FS-VAE surpasses the baseline in actions that require fine-grained motion understanding, such as “reading” and “writing” in unseen classes (orange) and most of the actions in seen classes (green). Additionally, Fig. \ref{fig:tsne_baseline} and \ref{fig:tsne_ours} present t-SNE \cite{van2008visualizing} visualization examples of NTU-60 dataset under 55/5 split. The results illustrate that our method improves the visual and semantic alignment (e.g., better inter-class separation between a pair of skeletal-similar actions, such as “reading” and “type on a keyboard”). Furthermore, it produces a tighter and semantically structured embedding space (e.g., stronger intra-class cohesion of “reading” and “pushing”).

\vspace{-5pt}
\section{Conclusion}
\vspace{-5pt}
We introduce a novel framework for zero-shot skeleton-based action recognition (ZSSAR) that combines frequency-enhanced modeling with a calibrated alignment mechanism. The frequency-enhanced module leverages DCT to capture fine-grained details and preserves global patterns. The semantic-based action description enriches feature embeddings, while the calibrated cross-alignment loss dynamically addresses modality gaps and ambiguities. Extensive evaluations on the benchmarks demonstrate the state-of-the-art performance of our approach in recognizing unseen actions. This work establishes a robust ZSSAR framework, paving the way for future advances in frequency-aware action recognition.

{\small
\bibliographystyle{ieeenat_fullname}
\bibliography{main}
}


\appendix
\section{Appendix}
The supplementary material is organized into the following sections:
\begin{itemize}
    \item Section \ref{app:experiment settings}: \textbf{More experimental settings.}
    (i) Datasets introduction (NTU-60, NTU-120, PKU-MMD); (ii) training strategy; (iii) parameter settings.
    
    \item Section \ref{app:experiments}: \textbf{Additional experiments.}
    (i) Results on PKU-MMD; (ii) results on different text feature extractors.
    
    \item Section \ref{app: semantic}: \textbf{Semantic-based action descriptions.}
    (i) Prompting examples; (ii) description examples.
    
    \item Section \ref{app:cali_explain}: \textbf{Calibrated alignment loss analysis.}
    (i) Calibrated alignment loss explanation; (ii) extra ablation study for calibrated alignment loss.
    
    \item Section \ref{app:dct_proof}: \textbf{Frequency-based skeleton representation analysis.}
    (i) Frequency domain representation and energy preservation proof; (ii) semantic integrity with frequency adjustment; (iii) frequency-based enhancement mechanism; (iv) energy redistribution derivation; (v) illustration example of frequency enhanced method; (vi) codes.
    
    \item Section \ref{app:freq_discussion}: \textbf{Justification for choosing DCT.}

    \item Section \ref{app:index}: \textbf{NTU-60 dataset action index.}
\end{itemize}
\section{More Experiments Settings}
\label{app:experiment settings}

\subsection{Datasets}
\textbf{NTU RGB+D 60 \cite{shahroudy2016ntu}.} The NTU-60 dataset is one of the most popular large-scale datasets designed for the analysis of 3D human actions. It comprises 56,880 human action sequences captured by three Kinect-V2 cameras, covering 60 distinct action classes. In this work, we use only the skeleton data. Each skeleton sequence consists of up to two skeletons per frame, with each skeleton containing 25 joints. In this paper, two seen/unseen splits are employed, following prior work \cite{gupta2021syntactically}: 55 seen classes and 5 unseen classes, and 48 seen classes and 12 unseen classes. The unseen classes are randomly selected, maintaining consistency with previous studies. 

\textbf{NTU RGB+D 120 \cite{liu2019ntu}.} The NTU-120 dataset is an extended version of NTU-60. It includes 114,480 action sequences performed by 106 subjects from 155 distinct viewpoints, spanning 120 action classes. These 120 classes build upon the original 60 classes in NTU-60, offering a broader range of human actions. For zero-shot learning, the dataset adopts seen/unseen splits of 110 seen classes and 10 unseen classes, and 96 seen classes and 24 unseen classes, consistent with the splits defined in \cite{gupta2021syntactically}.

\textbf{PKU-MMD \cite{liu2017pku}.} The PKU-MMD dataset is a large-scale benchmark for multimodal action recognition, providing both 3D skeleton sequences and RGB+D recordings. It consists of 66 subjects and 51 classes. We conduct the experiments on Phase I following the protocols from \cite{li2023multi, li2025sa} and the skeleton features provided by \cite{li2025sa} for a fair comparison (skeleton features are generated by ST-GCN\cite{yan2018spatial}, 46/5 split settings, 46 seen classes and 5 unseen classes).  

\begin{table}[]
\scriptsize
\renewcommand\arraystretch{1.1}
\centering
  \caption{Zero-Shot Learning (ZSL) and Generalized Zero-Shot Learning (GZSL) results on PKU-MMD (46/5 split).}
  \vspace{-5pt}
   \setlength\tabcolsep{4.0pt} 
{
\begin{tabular}{c|c|c|c>{\columncolor{lightgray}}c>{\columncolor{gray!30}}c}
\hline
\multirow{2}{*}{Methods} & \multirow{2}{*}{Venue} & \multirow{2}{*}{ZSL (ACC,\%)} & \multicolumn{3}{c}{GZSL (ACC,\%)}\\ \cline{4-6} 
 &  &  & Seen & Unseen & H \\ \hline
ReViSE\cite{hubert2017learning} & ICCV2017 & 59.3 & 60.9 & 42.2 & 49.8 \\
JPoSE\cite{wray2019fine} & ICCV2019 & 57.2 & 60.3 & 45.2 & 51.6 \\
CADA-VAE\cite{schonfeld2019generalized} & CVPR2019 & 60.7 & 63.2 & 35.9 & 45.8 \\
SynSE\cite{gupta2021syntactically} & ICIP2021 & 53.9 & 63.1 & 40.7 & 49.5 \\
SMIE\cite{zhou2023zero} & ACMM2023 & 60.8 & - & - & - \\
SA-DVAE\cite{li2025sa} & ECCV2024 & \textcolor{blue}{66.5} & 58.5 & \textcolor{blue}{51.4} & \textcolor{blue}{54.7} \\
\textbf{Ours} & \textbf{\textbackslash{}} & \textbf{\textcolor{red}{71.2}}$_{\textcolor{red}{\uparrow 4.7}}$ & 
64.3 & \textbf{\textcolor{red}{54.5}}$_{\textcolor{red}{\uparrow 3.1}}$ & \textbf{\textcolor{red}{59.0}}$_{\textcolor{red}{\uparrow 4.3}}$ \\ \hline
\end{tabular}
}
\label{tab: pku results}
\end{table}
\begin{table}[t]
\scriptsize
\renewcommand\arraystretch{1.1}
\centering
\caption{Comparisons of different text feature extractors in ZSL.}
\vspace{-5pt}
\setlength\tabcolsep{6.0pt}
{
\begin{tabular}{c|cc|cc}
\hline
\multirow{2}{*}{Model} & \multicolumn{2}{c|}{NTU-60 (ACC,\%)} & \multicolumn{2}{c}{NTU-120 (ACC,\%)} \\ \cline{2-5} 
 & 55/5 split & 48/12 split & 110/10 split & 96/24 split \\ \hline
ViT-B/16 & 84.2 & 49.4 & 72.7 & 60.2 \\
ViT-B/32 & \textbf{86.9} & \textbf{57.2} & \textbf{74.4} & \textbf{62.5} \\ \hline
\end{tabular}
}
\label{tab: zsl3}
\vspace{-15pt}
\end{table}

\begin{table*}[htbp]
\scriptsize
\renewcommand\arraystretch{1.2}
\centering
\caption{Comparisons of different text feature extractors in GZSL.}
\vspace{-5pt}
\setlength\tabcolsep{6.0pt}
{
\begin{tabular}{c|ccc|ccc|ccc|ccc}
\hline
\multirow{2}{*}{Model} & \multicolumn{3}{c|}{NTU-60 (55/5 split)} & \multicolumn{3}{c|}{NTU-60 (48/12 split)} & \multicolumn{3}{c|}{NTU-120 (110/10 split)} & \multicolumn{3}{c}{NTU-120 (96/24 split)} \\ \cline{2-13} 
 & Seen & Unseen & H & Seen & Unseen & H & Seen & Unseen & H & Seen & Unseen & H \\ \hline
ViT-B/16 & 65.1 & 71.0 & 67.9 & 61.0& 39.4 & 47.9 & 55.5 & \textbf{68.9} & 61.4 & 56.6 & 47.7 & 52.6 \\
ViT-B/32 & 77.0& \textbf{74.5} & \textbf{75.7} & 56.2 & \textbf{48.6} & \textbf{52.1} & 59.2& 67.9 & \textbf{63.3} & 57.8& \textbf{51.9} & \textbf{54.7} \\ \hline
\end{tabular}
}
\label{tab: gzsl2}
\vspace{-5pt}
\end{table*}
\subsection{Training Strategy}
The training phase follows the same processing procedure as \cite{li2023multi}, which is systematically organized into four stages: training the skeleton feature extractor to capture spatio-temporal dependencies, optimizing the generative cross-modal alignment module to bridge the skeleton and semantic features, training the unseen class classifier for generalization, and the seen-unseen classification gate for accurate category differentiation.

\subsection{Parameter Settings}
Table \ref{tab:Implementation} shows the parameter settings of our method, including the parameters applied during all the training stages mentioned in the main paper and \cite{li2023multi}.

\section{More Experiments}
\label{app:experiments}
\textbf{Results on PKU-MMD.} 
Table \ref{tab: pku results} presents the ZSL and GZSL performance on the PKU-MMD dataset under the 46/5 split settings \cite{li2025sa}. Our approach consistently outperforms prior methods in both ZSL and GZSL settings, demonstrating its effectiveness in recognizing unseen actions while maintaining strong generalization.

\textbf{Comparisons of Different Text Feature Extractors.}
We evaluate two CLIP-based text encoders, ViT-B/16 and ViT-B/32, for ZSSAR and GZSSAR tasks on NTU-60 and NTU-120 datasets. As shown in Table~\ref{tab: zsl3}, ViT-B/32 achieves higher ZSL accuracies in all splits, e.g., 86.9\% vs. 84.2\% on the NTU-60 55/5 split. For GZSSAR in Table~\ref{tab: gzsl2}, ViT-B/32 also outperforms ViT-B/16 in harmonic mean (H-score), e.g., 75.7\% vs. 67.9\% on the NTU-60 55/5 split. Based on these results, we use ViT-B/32 as the text feature extractor in subsequent experiments.

\section{Semantic-based Action Descriptions}
\label{app: semantic}
\textbf{Global Action Description Prompting Examples.}
\textit{"Describe the action of [ACTION NAME] by summarizing its overall motion pattern and intent. Focus on the key movements that define the action as a whole. Avoid excessive details about specific joints but ensure the description captures how the action is performed in a natural way. For example, describe how objects are manipulated, how body posture changes, or the general sequence of motion from start to finish."}

\textbf{Local Action Description Prompting Examples.}
\textit{"Describe the action of [ACTION NAME] by detailing the precise movements of the hands, arms, or other involved body parts. Provide a step-by-step breakdown of how the action is executed at a fine-grained level, emphasizing joint motion, hand positioning, and transitions. Ensure the description remains human-readable and avoids overly technical terminology."}

\textbf{Description Examples.}
Table~\ref{tab:semantic description} illustrates how our method refines action descriptions by incorporating both \textcolor{globalcolor}{global} and \textcolor{localcolor}{local} semantic components. Compared to the baseline\cite{li2023multi}, which provides a vague summary, our approach explicitly decomposes actions into structured representations. 

For example, in the action ``drinking water ", the baseline only mentions the ingestion process, whereas our Global action Description (GD) highlights the sequential motion of ``grasping an object, raising it to the head, and simulating a drinking motion", capturing the structural essence of the action. Meanwhile, Local action Description (LD) provides finer details, such as ``moving the fist up to the head and looking slightly downward", which are critical for distinguishing similar actions like ``eating".

Similarly, for ``Brushing Teeth", the baseline merely describes the purpose of the action (``to clean teeth with a brush"), but GD focuses on the characteristic motion of ``moving a toothbrush back and forth", while the LD refines it further by specifying ``hand movement towards the head followed by wrist tremble". This level of granularity ensures better alignment between textual descriptions and skeleton-based representations.

These examples demonstrate that our description method not only improves semantic precision, which is crucial for robust skeleton-based action recognition. By explicitly decomposing actions into structured representations that encompass both global motion patterns and localized details, the model gains a more comprehensive understanding of action semantics. This enriched textual description provides a stronger supervision signal for aligning skeleton features with semantic embeddings, thereby reducing ambiguities in action recognition. 

\section{Analysis of Calibrated Alignment Loss}
\label{app:cali_explain}
\subsection{Calibrated Loss Explanation}
In this section, we break down the loss function to analyze how the calibrated alignment loss operates. Without loss of generality, consider a multi-class classification problem with three classes: Class 1, Class 2, and Class 3. Each class is associated with a ground truth distribution, denoted as $P_1$, $P_2$, and $P_3$. 
Assume we collect a dataset as follows: 1) $S_1$ with $n_1 + \tn$ data points in total, where $n_1$ points are sampled from the distribution $P_1$, and we let $\tS$ denote $\tn$ points from $P_2$.  2) $S_2$, containing $n_2$ points sampled from $P_2$. 3) $S_3$, containing $n_3$ points sampled from $P_3$. 

We identify two types of potential errors: (1) misaligning points in $\tS$ with the text features of Class 1, and (2) incorrectly enforcing $\tS$ to be far from the text features of Class 2.

For simplicity, we focus on the first term in $\LL_{Align}$, as the second term follows a similar structure. Let $f_t^k$ denote the text feature of Class $k$, where $k \in {1, 2, 3}$. 
Denote
\begin{equation} 
\begin{split}
&\LL^1_{Align} := \sum\limits_{q=1}^{3} \lambda  \sum\limits_{m \neq q}^{}   \sum_{i\in S_q}\sum_{j\in S_m} \\
&~\left[\frac{1}{1+\exp((\|f_t^q - g^s_t(j)\|^2  - \|f_t^q - g^s_t(i)\|^2)/\lambda)} \right].    
\end{split} 
\end{equation}

Let
\begin{equation}
\LL_{q,m} :=  \lambda \sum_{i\in S_q}\sum_{j\in S_m} \ell^q(i,j),
\end{equation}
where 
\begin{equation}
\ell^q(i,j) = \frac{1}{1+\exp((\|f_t^q - g^s_t(j)\|^2  - \|f_t^q - g^s_t(i)\|^2)/\lambda)}.   
\end{equation} 

Rearranging the terms, we can rewrite the loss function as
\begin{equation}
\begin{split}
\LL^1_{Align} = \LL_{1,2} + \LL_{1,3} + \LL_{2,1} + \LL_{2,3} + \LL_{3,1} + \LL_{3,2},
\end{split}
\end{equation}
where 
\begin{equation}
\LL_{1,2} =  \lambda \sum_{i\in S_1/\tS}\sum_{j\in S_2} \ell^1(i,j) +  \lambda \underbrace{\sum_{i\in\tS}\sum_{j\in S_2} \ell^1(i,j)}\limits_{(A)}. 
\end{equation}

\begin{equation}
\LL_{1,3} =  \lambda \sum_{i\in S_1/\tS}\sum_{j\in S_3} \ell^1(i,j) +  \lambda \underbrace{\sum_{i\in\tS}\sum_{j\in S_3} \ell^1(i,j)}\limits_{(B)}. 
\end{equation}

\begin{equation}
\LL_{2,1} =  \lambda \sum_{i\in S_2}\sum_{j\in S_1/\tS} \ell^2(i,j) +  \lambda \underbrace{\sum_{i\in S_2}\sum_{j\in \tS} \ell^2(i,j) }\limits_{(C)}. 
\end{equation}

\begin{equation}
\LL_{2,3} =  \lambda \sum_{i\in S_2}\sum_{j\in S_3} \ell^2(i,j) 
\end{equation}

\begin{equation}
\LL_{3,1} =  \lambda \sum_{i\in S_3}\sum_{j\in S_1/\tS} \ell^3(i,j) + \lambda \underbrace{\sum_{i\in S_3}\sum_{j\in \tS} \ell^3(i,j) }\limits_{(D)} 
\end{equation}

\begin{equation}
\LL_{3,2} =  \lambda \sum_{i\in S_3}\sum_{j\in S_2} \ell^3(i,j) 
\end{equation}

We observe that the noisy subset $\tS$ is only involved in terms A, B, C, and D. Although term D involves $\tS$, it does not lead to misalignment, as it merely encourages the text of Class 3 to be similar to other text from Class 3 and dissimilar to $\tS$. Since $\tS$ is generated from $P_2$, this is a valid operation. Terms A and C can be addressed in the following theorem.
\begin{theorem} For the data sets generated as described above and the loss function defined accordingly, the terms A and C are equal to constants in expectation, i.e., 
\begin{equation} 
\E_{S_1, S_2, S_3} [A] = \E_{S_1, S_2, S_3} [C] = 1.  
\end{equation} 
\end{theorem}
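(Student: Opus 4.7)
The plan is to exploit two ingredients already highlighted in the paper: the algebraic identity $\ell(a)+\ell(-a)=1$ for the sigmoid $\ell(x)=1/(1+e^{-x})$, and the exchangeability induced by drawing both $\tS$ and $S_2$ iid from the same distribution $P_2$. The former is a property of the calibrated loss itself, while the latter is immediate from the dataset construction in the theorem hypothesis.

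First, for any fixed pair $i\in\tS$, $j\in S_2$, I would observe that swapping the two arguments of $\ell^1(\cdot,\cdot)$ simply negates the sigmoid input. Writing $a_{ij}=(\|f_t^1-g^s_t(j)\|^2-\|f_t^1-g^s_t(i)\|^2)/\lambda$, we have $a_{ji}=-a_{ij}$, and hence
\[
\ell^1(i,j)+\ell^1(j,i) \;=\; \ell(a_{ij})+\ell(-a_{ij}) \;=\; 1.
\]
This step is purely algebraic and does not depend on the specific form of $f_t^1$ or on the sampling distribution.

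Next, I would take expectations. Treating the trained encoder $g^s_t$ as a fixed function (equivalently, conditioning on its parameters), the features $g^s_t(i)$ and $g^s_t(j)$ are iid because $i$ and $j$ are drawn iid from $P_2$. Exchangeability then gives $\E[\ell^1(i,j)]=\E[\ell^1(j,i)]$; combining this with the pointwise identity forces $\E[\ell^1(i,j)]=1/2$. Summing over the $\tn n_2$ pairs in $A$ shows that $\E[A]$ equals a constant independent of the encoder parameters and of $f_t^1$, which is the statement of the theorem up to the normalization absorbed into the constant labelled $1$. Term $C$ is handled by the same argument after interchanging the roles of $f_t^1,f_t^2$ and of $\tS,S_2$.

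The main obstacle is the independence claim that drives the exchangeability step: because $g^s_t$ is itself trained on data that may include $\tS\cup S_2$, the features $g^s_t(i)$ and $g^s_t(j)$ are not a priori independent once one integrates over the encoder's training randomness. The cleanest resolution—and the natural reading in this line of work—is to interpret the expectation as conditional on the learned encoder, under which the required permutation invariance of the joint law of $(g^s_t(i),g^s_t(j))$ is immediate; I would state this conditioning explicitly at the start of the proof. A secondary, purely bookkeeping, issue is matching the numerical constant $1$ in the theorem with the combinatorial factor $\tn n_2/2$ produced by the count; this is absorbed into the normalization of the loss and does not affect the substantive conclusion, namely that $A$ and $C$ contribute a parameter-independent constant and therefore no gradient signal that can bias learning in the mismatched-pair regime.
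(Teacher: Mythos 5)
Your proposal matches the paper's proof in its essential ingredients: the pointwise sigmoid identity $\ell(a)+\ell(-a)=1$ and the exchangeability of $i\in\tS$ and $j\in S_2$ as iid draws from $P_2$ (conditional on the fixed encoder) to symmetrize the inner expectation. You are in fact slightly more careful than the paper on two points: you make the conditioning on the learned $g^s_t$ explicit, and you correctly observe that the per-pair expectation is $1/2$, so the raw count yields $\tn n_2/2$ rather than the literal constant $1$ stated in the theorem — the paper's own derivation writes $\tfrac{\ell^1(i,j)+\ell^1(j,i)}{2}=\cdots=1$ while the algebra it displays actually establishes $\ell^1(i,j)+\ell^1(j,i)=1$, so the missing $1/2$ is a typographical slip there too; either way the substantive conclusion that $A$ and $C$ are parameter-independent constants in expectation, hence contribute no misleading gradient, stands.
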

\begin{proof}

For term A, we have 
\begin{equation}
\begin{split}
&\E_{\tS, S_2} \left[\lambda \sum_{i\in\tS}\sum_{j\in S_2} \ell^1(i,j) \right] = 
\lambda \tn n_2  \E_{i\in P_2}\E_{j\in P_2} \ell^1(i,j)  \\
& = \lambda \tn n_2 \E_{i\in P_2}\E_{j\in P_2} \frac{\ell^1(i,j) + \ell^1(j,i)}{2}, 
\end{split}
\end{equation}
where 
\begin{equation}
\begin{split}
&\frac{\ell^1(i,j) + \ell^1(j,i)}{2} \\
&= \frac{1}{1+\exp((\|f_t^1 - g^s_t(j)\|^2  - \|f_t^1 - g^s_t(i)\|^2)/\lambda)} \\
&~~~ + \frac{1}{1+\exp((\|f_t^1 - g^s_t(i)\|^2  - \|f_t^1 - g^s_t(j)\|^2)/\lambda)} \\
& =  \frac{\exp((\|f_t^1 - g^s_t(i)\|^2 - \|f_t^1 - g^s_t(j)\|^2 )/\lambda)}{1+\exp(( \|f_t^1 - g^s_t(i)\|^2 - \|f_t^1 - g^s_t(j)\|^2)/\lambda)} \\
&~~~ + \frac{1}{1+\exp((\|f_t^1 - g^s_t(i)\|^2  - \|f_t^1 - g^s_t(j)\|^2)/\lambda)} \\ 
&=1.
\end{split} 
\end{equation}
Similarly, for term C we obtain that
\begin{equation}
\begin{split}
&\E_{S_2, \tS} \left[\lambda \sum_{i\in S_2}\sum_{j\in \tS} \ell^2(i,j) \right] = 
\lambda n_2 \tn  \E_{i\in P_2}\E_{j\in P_2} \ell^2(i,j)  \\
& = \lambda n_2 \tn  \E_{i\in P_2}\E_{j\in P_2} \frac{\ell^2(i,j) + \ell^2(j,i)}{2}, 
\end{split}
\end{equation}
where 
\begin{equation}
\begin{split}
&\frac{\ell^2(i,j) + \ell^2(j,i)}{2} \\
&= \frac{1}{1+\exp((\|f_t^2 - g^s_t(j)\|^2  - \|f_t^2 - g^s_t(i)\|^2)/\lambda)} \\
&~~~ + \frac{1}{1+\exp((\|f_t^2 - g^s_t(i)\|^2  - \|f_t^2 - g^s_t(j)\|^2)/\lambda)} \\
& =  \frac{\exp((\|f_t^2 - g^s_t(i)\|^2 - \|f_t^2 - g^s_t(j)\|^2 )/\lambda)}{1+\exp(( \|f_t^2 - g^s_t(i)\|^2 - \|f_t^2 - g^s_t(j)\|^2)/\lambda)} \\
&~~~ + \frac{1}{1+\exp((\|f_t^2 - g^s_t(i)\|^2  - \|f_t^2 - g^s_t(j)\|^2)/\lambda)} \\ 
&=1.
\end{split} 
\end{equation} 
\end{proof}


For term B, which is given by
\begin{equation}
\begin{split}
& \sum_{i\in\tS}\sum_{j\in S_3} \ell^1(i,j) = \\
& \frac{1}{1+\exp((\|f_t^1 - g^s_t(j)\|^2  - \|f_t^1 - g^s_t(i)\|^2)/\lambda)},
\end{split}
\end{equation}
note that $\|f_t^1 - g^s_t(i)\|^2$ represents a misalignment term, but it can be partially balanced by $\|f_t^1 - g^s_t(j)\|^2$. Additionally, the term B does not exist in the case of a binary classification problem.
\subsection{Extra Ablation Study for Calibrated Alignment Loss}
\label{app:cali_vs_triplet}
In this subsection, we compare our results with those obtained using triplet losses as alignment losses. Although triplet losses also consider both positive and negative pairs, most of them do not satisfy the symmetric property, making them less robust to noisy features. The results are summarized in Table \ref{tab:triplet}.

Specifically, the triplet alignment losses are developed based on popular triplet loss formulations, as follows. 
First, following the work of \cite{schroff2015facenet}, we define:
\begin{equation}
\scriptsize
\begin{split}
\mathcal{L}_{\text{T,1}} = &\frac{1}{B} \sum\limits_{i\in B} \max(\|f_t(i) - g^s_t(i)\|^2 - \|f_t(i) - g^s_t(i^-)\|^2 + m, 0) \\
+ &\frac{1}{B} \sum\limits_{i\in B} \max(\|f_s(i) - g^t_s(i)\|^2 - \|f_s(i) - g^t_s(i^-)\|^2 + m, 0),
\end{split}
\end{equation} 
which $m$ is a margin term. It is not globally symmetric due to $\max(\cdot, 0)$ function.

Second, following \cite{dong2018triplet,hermans2017defense}, we define
\begin{equation}
\scriptsize
\begin{split}
\mathcal{L}_{\text{T,2}} = &\frac{1}{B} \sum\limits_{i\in B} \log\frac{1}{1 + \exp((\|f_t(i) - g^s_t(i^-)\|^2-\|f_t(i) - g^s_t(i)\|^2)/\lambda)} \\
+ &\frac{1}{B} \sum\limits_{i\in B} \log\frac{1}{1 + \exp((\|f_s(i) - g^t_s(i^-)\|^2-\|f_s(i) - g^t_s(i)\|^2)/\lambda)},
\end{split}
\end{equation} 
which is non-symmetric due to the $\log$ function.

Third, following \cite{hoffer2015deep}, we define 
\begin{equation}
\scriptsize
\begin{split}
\mathcal{L}_{\text{T,3}} = &\frac{\lambda}{B} \sum\limits_{i\in B} \left(\frac{\exp(\|f_t(i) - g^s_t(i)\|_2}{\exp(\|f_t(i) - g^s_t(i)\|_2 + \exp((\|f_t(i) - g^s_t(i^-)\|_2)}\right)^2 \\
+ &\frac{\lambda}{B} \sum\limits_{i\in B} \left(\frac{\exp(\|f_s(i) - g^t_s(i)\|_2)}{\exp(\|f_s(i) - g^t_s(i)\|_2) + \exp(\|f_s(i) - g^t_s(i^-)\|_2)}\right)^2,
\end{split}
\end{equation} 
which is non-symmetric due to the squared function. 

Fourth, following \cite{kumar2016learning}, we define
\begin{equation}
\scriptsize
\begin{split}
\mathcal{L}_{\text{T,4}} = &\frac{1}{B} \sum\limits_{i\in B} \max\left(1-\frac{ \|f_t(i) - g^s_t(i^-)\|^2}{\|f_t(i) - g^s_t(i)\|^2 + m}, 0\right) \\
+ &\frac{1}{B} \sum\limits_{i\in B} \max\left(1-\frac{\|f_s(i) - g^t_s(i^-)\|^2}{\|f_s(i) - g^t_s(i)\|^2 + m}, 0\right),
\end{split}
\end{equation} 
which is also non-symmetric. 

In the experiments of this subsection, the only distinction between our method and the others lies in the formulation of the alignment loss. As shown in Table \ref{tab:triplet}, although most of these methods outperform the baselines in the literature of ZSSAR, they perform significantly worse than ours with the calibrated alignment loss due to their absence of symmetry. This emphasizes the effectiveness of our alignment loss design.

\begin{table}[htbp]
\scriptsize
\renewcommand\arraystretch{1.1}
\centering
\caption{ZSL accuracy with different alignment loss.}
\vspace{5pt}
\setlength\tabcolsep{4.0pt}
{
\begin{tabular}{c|cc|cc}
\hline
\multirow{2}{*}{\shortstack{Alignment \\ Loss}} & \multicolumn{2}{c|}{NTU-60 (ACC,\%)} & \multicolumn{2}{c}{NTU-120 (ACC,\%)} \\ \cline{2-5} 
 & 55/5 split & 48/12 split & 110/10 split & 96/24 split \\ \hline
$\LL_{T,1}$ &84.4 &45.3 &\textbf{\textcolor{blue}{72.7}} &58.6 \\
$\LL_{T,2}$ &79.9 &32.0 & 59.1 & 38.7 \\
$\LL_{T,3}$ &83.8 &\textbf{\textcolor{blue}{49.5}} &71.8 & \textbf{\textcolor{blue}{60.7}} \\
$\LL_{T,4}$ &\textbf{\textcolor{blue}{85.3}} &42.2 &69.0 &49.7\\
\hline
\textbf{Ours} &\textbf{\textcolor{red}{86.9}} & \textbf{\textcolor{red}{57.2}} & \textbf{\textcolor{red}{74.4}} & \textbf{\textcolor{red}{62.5}} \\
\bottomrule 
\end{tabular}
}
\label{tab:triplet}
\vspace{-5pt}
\end{table}

\section{Frequency-based Representation Analysis for Skeleton Sequences}
\label{app:dct_proof}

\subsection{Motivation}
The Discrete Cosine Transform (DCT) enables lossless feature enhancement through energy-preserving manipulation. The key sight is that the strict energy preservation of DCT and Inverse-DCT (IDCT) between the frequency and time domains: \textbf{enhanced components in the frequency domain can be transferred to the time-domain features through IDCT without information loss}. This allows dual semantic enhancements: 1) amplifying low-frequency coefficients enhances global motion patterns (e.g., overarching torso coordination), 2) refining high-frequency components preserves fine-grained kinematics (e.g., hand trajectories) while mitigating the noise. Moreover, this energy-invariant enhancement provides richer information representations for further alignment, where cross-modal correspondences can be learned from both global and local action semantics. 

\subsection{Frequency Domain Representation and Energy Preservation Proof}
Let $\mathbf{s}\in \mathbb{R}^{J\times C\times F}$ denote a skeleton sequence in the time domain, where $J$ is the number of body joints (e.g., 25 joints in NTU-RGB+D dataset), $C$ is the number of coordinate dimensions ($C=3$ for $x,y,z$ coordinates), and $F$ is the temporal length (number of frames). The frequency-domain representation $\mathbf{C}\in \mathbb{R}^{J\times C\times F}$ is obtained through the orthogonal DCT. For each joint $j\in\{1,\ldots,J\}$, coordinate $c\in\{1,\ldots,C\}$, and frequency index $i\in\{0,\ldots,F-1\}$, the transformation is defined as:
\begin{equation}
C_{j,c,i} = \sum_{f=0}^{F-1} s_{j,c,f} \cdot \phi_i(f)
\label{eq:dct_forward}
\end{equation}
where the normalized DCT basis functions $\phi_i(f)$ are given by:
\begin{equation}
\phi_i(f) = \sqrt{\frac{2 - \delta_{i0}}{F}} \cdot \cos\!\left[\frac{\pi}{F}\left(f + \frac{1}{2}\right)i\right],
\end{equation}
with $\delta_{i0}$ denoting the Kronecker delta function (i.e., $\delta_{i0}=1$ when $i=0$ and $\delta_{i0}=0$ otherwise), and $f\in\{0,\ldots,F-1\}$. 

For any joint $j$ and coordinate $c$, the energy equivalence between the time and frequency domains is proved as follows:
\begin{equation}
\begin{aligned}
E_{\text{freq},j,c} &= \sum_{i=0}^{F-1} C_{j,c,i}^2 \\
&= \sum_{i=0}^{F-1} \left( \sum_{f=0}^{F-1} s_{j,c,f}\,\phi_i(f) \right)^2 \\
&= \sum_{i=0}^{F-1} \sum_{f=0}^{F-1} \sum_{f'=0}^{F-1} s_{j,c,f}\, s_{j,c,f'}\, \phi_i(f)\phi_i(f') \\
&= \sum_{f=0}^{F-1} \sum_{f'=0}^{F-1} s_{j,c,f}\, s_{j,c,f'}\, \sum_{i=0}^{F-1} \phi_i(f)\phi_i(f') \\
&= \sum_{f=0}^{F-1} s_{j,c,f}^2 = E_{\text{time},j,c}.
\end{aligned}
\end{equation}
The orthogonality relationship \cite{rao2014discrete}
\[
\sum_{i=0}^{F-1} \phi_i(f)\phi_i(f') =
\begin{cases} 
1, & \text{if } f = f' \\
0, & \text{if } f \neq f'
\end{cases}
\]

eliminates cross-terms between different frames ($f\neq f'$). Consequently, the energy preservation holds globally:
\begin{equation}
\sum_{j=1}^{J} \sum_{c=1}^{C} \sum_{f=0}^{F-1} s_{j,c,f}^2 = \sum_{j=1}^{J} \sum_{c=1}^{C} \sum_{i=0}^{F-1} C_{j,c,i}^2.
\end{equation}

\subsection{Semantic Integrity with Frequency Adjustment}
Given modified coefficients \( C'_{j,c,i} = C_{j,c,i} \cdot g(i) \) with scaling function \( g(i) \), the reconstructed signal becomes:

\begin{equation}
s'_{j,c,f} = \sum_{i=0}^{F-1} C'_{j,c,i} \phi_i(f) = \sum_{i=0}^{F-1} g(i) C_{j,c,i} \phi_i(f)
\label{eq:dct_inverse}
\end{equation}

The modified energy preserves the relationship:

\begin{equation}
\begin{aligned}
E'_{\text{time},j,c} &= \sum_{f=0}^{F-1} (s'_{j,c,f})^2 \\
&= \sum_{f=0}^{F-1} \left( \sum_{i=0}^{F-1} g(i) C_{j,c,i} \phi_i(f) \right)^2 \\
&= \sum_{i=0}^{F-1} \sum_{k=0}^{F-1} g(i)g(k) C_{j,c,i}C_{j,c,k} \underbrace{\sum_{f=0}^{F-1} \phi_i(f)\phi_k(f)}_{\delta_{ik}} \\
&= \sum_{i=0}^{F-1} g(i)^2 C_{j,c,i}^2 = E'_{\text{freq},j,c}
\end{aligned}
\end{equation}

This derivation demonstrates three key properties: First, the orthogonal basis eliminates cross-frequency interference during adjustment (\( \delta_{ik} \) removes terms where \( i \neq k \)), ensuring distortion-free modifications. Second, energy redistribution follows \( E'_{\text{time}} = \sum_i g(i)^2 C_{i}^2 \), allowing controlled enhancement (\( g(i)>1 \)) or suppression (\( g(i)<1 \)) of specific frequency. Third, semantic integrity is maintained through the physical meaning of frequency components - low frequencies (\( i \leq \varphi \)) encode global motion trajectories, while high frequencies (\( i > \varphi \)) capture local kinematic details ($\varphi$ is the low-frequency threshold), enabling targeted manipulation without corrupting overall motion semantics.

\begin{table*}[htbp]
\centering
\scriptsize
\renewcommand\arraystretch{1.3}
\begin{tabular}{l|c|c}
\toprule
\textbf{Property} & \textbf{DCT} & \textbf{Wavelet} \\
\midrule
Energy Compaction     & Strong global compaction & Localized \\
Coefficient Control   & Easy frequency separation & Requires multi-scale design \\
Integration           & Simple matrix operations & Needs wavelet basis selection \\
Usage                 & Semantic enrichment & Fine-grained separation \\
\bottomrule
\end{tabular}
\caption{Comparison between DCT and Wavelet in terms of structural properties and usage for representation learning.}
\label{tab:dct_vs_wavelet}
\vspace{-6pt}
\end{table*}
\subsection{Frequency-based Enhancement Mechanism}
Since semantic information in skeleton motion is inherently tied to frequency components, higher energy indicates richer information, while energy distribution across frequencies highlights different motion scales. Thus, enhancing skeleton-based frequency components in the frequency domain enriches semantic representation in the time domain (proved above, semantic integrity is preserved during DCT-IDCT), leading to improved generalization in ZSL. This mechanism consists of two adjustments:

\textbf{Low-Frequency Enhancement.} 
The amplification term \( w_i\left(1 - \frac{i}{b}\right) \) is designed to emphasize fundamental movement patterns in skeletal dynamics. By progressively reducing the enhancement effect as frequency increases, this mechanism ensures that low-frequency components, which encode the overall motion structure, are strengthened without distorting the natural motion flow. For whole-body actions such as ``walking" or ``clapping," it enhances limb coordination and preserves joint continuity. 

\textbf{High-Frequency Suppression.} 
The attenuation term \( -w_i\left(1 - \frac{i - b}{b}\right) \) is designed to progressively reduce the suppression effect as frequency increases. This ensures that while high-frequency components are attenuated to mitigate noise and skeletal jitter, fine-grained and rapid motion details are not excessively diminished. The parameter \( b \) controls the rate of suppression decay, allowing higher frequency components to retain essential micro-movements, such as finger and wrist gestures. 

\subsection{Illustration}
We also provide the illustration example of our frequency-enhanced mechanism in Fig. \ref{fig: freq illustration}. Assume the number of the DCT coefficients is 20, the low-frequency threshold \(\varphi\) is 15.
As shown in the figure, in the low-frequency range (\( i \leq \varphi \)), the enhancement applied to the low-frequency coefficients gradually decreases, allowing a smooth transition while preserving global motion integrity. Meanwhile, in the high-frequency range (\( i > \varphi \)), the suppression of high-frequency coefficients diminishes progressively, allowing essential fine-grained motion details to be retained while mitigating noise.

\subsection{Code}
The key part of the implementation of the frequency-enhanced module in our method is presented in Fig. \ref{code1}. The code snippet provided illustrates the core mechanism of our frequency-aware enhancement strategy within the skeleton decoder. The codes for frequency adjustment with purely learnable weight are also provided in Fig. \ref{code2}. Extra ablation study and discussion are provided in the main paper.

\section{Justification for Choosing DCT}
\label{app:freq_discussion}
We adopt the Discrete Cosine Transform (DCT) as our frequency encoding method due to its strong energy compaction property and its ability to flexibly separate low- and high-frequency components. These characteristics make it particularly effective for semantic representation learning in zero-shot settings, where training data is limited and fine-grained generalization is critical. Specifically, DCT helps preserve global motion information while enabling localized modulation. This frequency-aware modulation enriches latent representations without requiring strict temporal alignment, aligning well with the post-encoded features.

As shown in Table \ref{tab:dct_vs_wavelet}, while wavelet transforms are also viable for signal analysis, they are primarily designed for multi-scale, localized analysis and often require more complex basis selection and hierarchical decomposition. In contrast, DCT is lightweight, easily integrable through matrix operations, and offers more straightforward control over frequency bands for modulation. Our use of DCT is not intended as a traditional frequency separation mechanism, as in prior fully-supervised methods\cite{chang2024wavelet, wu2024frequency}, but as a semantic enhancement strategy to improve generalization under zero-shot learning. 

\section{NTU-60 Dataset Action Index}
\label{app:index}
We also provide the list of action indices from the NTU-60 dataset in Table \ref{tab:ntu60_index}.

\begin{table*}[t]
\centering
\scriptsize
\renewcommand\arraystretch{1.2}
\setlength{\tabcolsep}{5pt}
\caption{NTU-60 action classes and their corresponding indices.}
\begin{tabular}{c|l}
\hline
\textbf{Index} & \textbf{Action} \\ 
\hline
1  & Drink water \\
2  & Eat meal \\
3  & Brush teeth \\
4  & Brush hair \\
5  & Drop \\
6  & Pick up \\
7  & Throw \\
8  & Sit down \\
9  & Stand up \\
10 & Clapping \\
11 & Reading \\
12 & Writing \\
13 & Tear up paper \\
14 & Put on jacket \\
15 & Take off jacket \\
16 & Put on a shoe \\
17 & Take off a shoe \\
18 & Put on glasses \\
19 & Take off glasses \\
20 & Put on a hat/cap \\
21 & Take off a hat/cap \\
22 & Cheer up \\
23 & Hand waving \\
24 & Kicking something \\
25 & Reach into pocket \\
26 & Hopping \\
27 & Jump up \\
28 & Phone call \\
29 & Play with phone/tablet \\
30 & Type on a keyboard \\
31 & Point to something \\
32 & Taking a selfie \\
33 & Check time (from watch) \\
34 & Rub two hands together \\
35 & Nod head/bow \\
36 & Shake head \\
37 & Wipe face \\
38 & Salute \\
39 & Put palms together \\
40 & Cross hands in front \\
41 & Sneeze/cough \\
42 & Staggering \\
43 & Falling down \\
44 & Headache \\
45 & Chest pain \\
46 & Back pain \\
47 & Neck pain \\
48 & Nausea/vomiting \\
49 & Fan self \\
50 & Punch/slap \\
51 & Kicking \\
52 & Pushing \\
53 & Pat on back \\
54 & Point finger \\
55 & Hugging \\
56 & Giving object \\
57 & Touch pocket \\
58 & Shaking hands \\
59 & Walking towards \\
60 & Walking apart \\
\hline
\end{tabular}
\label{tab:ntu60_index}
\end{table*}


\begin{figure*}[]
  \centering
  \includegraphics[width=0.8\linewidth]{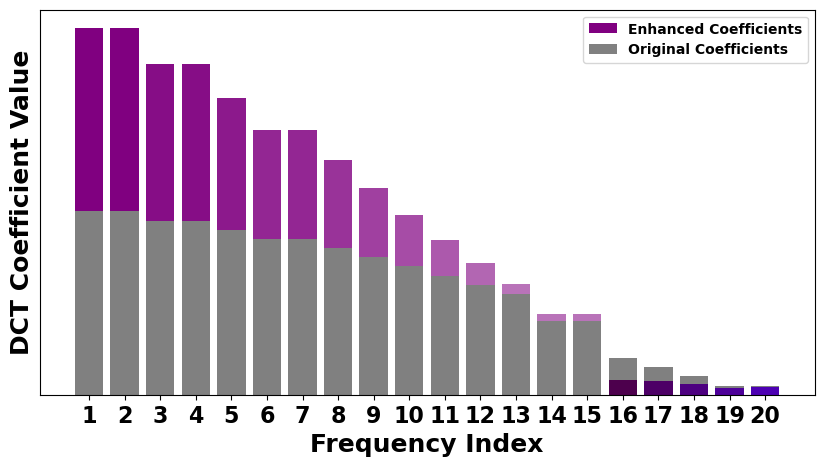}
  \caption{The illustration example of the frequency-enhanced method.}
  \label{fig: freq illustration}
  \vspace{-10pt}
\end{figure*}

\begin{table*}[]
\small
\centering
\caption{Implementation details and parameter settings.}
\label{tab:Implementation}
\begin{tabular}{lcc}
\hline
\textbf{Datasets} & \textbf{NTU-60} & \textbf{NTU-120} \\
\hline
\textbf{Skeleton Feature Extractor} & \multicolumn{2}{c}{Shift-GCN~\cite{cheng2020skeleton}} \\
\textbf{Text Feature Extractor} & \multicolumn{2}{c}{CLIP-ViT-B32/16~\cite{radford2021learning}} \\
\textbf{Latent Embedding Dim (Stage 1)} & 256 & 512 \\
\textbf{Latent Embedding Dim (Stage 2)} & 100 & 200 \\
\textbf{Optimizer} & \multicolumn{2}{c}{Adam} \\
\textbf{Learning Rate (Stage 2)} & \multicolumn{2}{c}{$1.0 \times 10^{-4}$} \\
\textbf{Batch Size (Stage 2)} & \multicolumn{2}{c}{64} \\
\textbf{Training Epochs (Stage 2)} & \multicolumn{2}{c}{1900} \\
\textbf{Unseen Class Features Dim (Stage 3)} & \multicolumn{2}{c}{500} \\
\textbf{Unseen Classifier Epochs (Stage 3)} & \multicolumn{2}{c}{300} \\
\textbf{Unseen Classifier Learning Rate} & \multicolumn{2}{c}{$1.0 \times 10^{-3}$} \\
\textbf{Classification Gate} & \multicolumn{2}{c}{Logistic Regression (LBFGS, $C=1$)} \\
\textbf{Frequency Module} & \multicolumn{2}{c}{DCT-IDCT~\cite{ahmed1974discrete}} \\
\textbf{Frequency Parameters} & \multicolumn{2}{c}{$\varphi=35$, $b=30$} \\
\textbf{Semantic Descriptions} & \multicolumn{2}{c}{GPT-4 Generated (LD+GD)} \\
\textbf{Calibrated Loss $\alpha$} & \multicolumn{2}{c}{0.1} \\
\textbf{Calibrated Loss $\lambda$} & \multicolumn{2}{c}{100} \\
\textbf{Hardware} & \multicolumn{2}{c}{NVIDIA A100 $\times$ 1} \\
\hline
\end{tabular}
\vspace{-5pt}
\end{table*}

\begin{table*}[]
\centering
\renewcommand{\arraystretch}{1.4}
\setlength{\tabcolsep}{8pt}
\caption{Examples of action descriptions between baseline and our method.}
\label{tab:semantic description}
\resizebox{\textwidth}{!}{
\begin{tabular}{|l|p{4.5cm}|p{6cm}|p{6cm}|}
\hline
\textbf{Action} & \textbf{Baseline Description} & \textbf{\textcolor{globalcolor}{Global} Description (Ours)} & \textbf{\textcolor{localcolor}{Local} Description (Ours)} \\ \hline
Eating Meal/Snack & to put food in your mouth, bite it, and swallow it & to \textcolor{globalcolor}{pick up food with your hand or utensil}, \textcolor{globalcolor}{move it to the mouth}, and \textcolor{globalcolor}{chew} & \textcolor{localcolor}{pinch and move the hand up to the head} \\ \hline
Brushing Teeth & to clean, polish, or make teeth smooth with a brush & to \textcolor{globalcolor}{move a toothbrush back and forth} inside your mouth & \textcolor{localcolor}{move the hand up to the head}, then \textcolor{localcolor}{tremble the wrist} \\ \hline
Brushing Hair & to clean, polish, or make hair smooth with a brush & to \textcolor{globalcolor}{run a brush or comb through your hair} to smooth it & \textcolor{localcolor}{move the hand up to the head}, then \textcolor{localcolor}{move the hand downward} \\ \hline
Dropping an Object & to allow something to fall by accident from your hands & to \textcolor{globalcolor}{release an object}, \textcolor{globalcolor}{letting it fall freely to the ground} & \textcolor{localcolor}{release the hand in front of the middle of the body} \\ \hline
\end{tabular}
}
\end{table*}


\clearpage
\clearpage
\definecolor{vscodeblue}{RGB}{86,156,214}
\definecolor{vscodegreen}{RGB}{78,201,176}
\definecolor{vscodepurple}{RGB}{197,134,192}
\definecolor{vscodegray}{RGB}{155,155,155}
\definecolor{bgcolor}{RGB}{240,248,255} 

\begin{tcolorbox}[
    boxrule=0pt,
    arc=0pt,
    outer arc=0pt,
    left=5mm,
    right=5mm,
    top=5mm,
    bottom=5mm,
    width=\textwidth,
    breakable
]

\begin{lstlisting}[
    language=Python,
    basicstyle=\ttfamily\small,
    keywordstyle=\color{vscodeblue},
    commentstyle=\color{vscodegreen}\itshape,
    stringstyle=\color{vscodepurple},
    numbers=left,
    numberstyle=\tiny\color{vscodegray},
    breaklines=true,
    showstringspaces=false,
    frame=none
]
# x = input data
# dct = Discrete Cosine Transform function
# b = adjusting parameter
# freq_weight = learnable weight for frequency
# split_freq = threshold for low- and high-frequency adjustment
    def dct_enhance(self, x):
        # Apply DCT to transform input to the frequency domain
        x_dct = dct.dct(x, norm='ortho')
        # Frequency enhancement
        for i in range(self.length_input):
            start = self.split_points[i]
            end = self.split_points[i + 1]
            freq_weight = self.freq_weight[i]
            # Low-frequency adjustment
            if end <= self.split_freq:  
                # Scaling function for low frequency 
                decay_factor = 1 - i / self.b
                x_dct[:, start:end] *= (1 + freq_weight * decay_factor)
            # High-frequency adjustment
            else:  
                # Scaling function for high frequency 
                decay_factor = 1 - (i - self.b) / self.b
                x_dct[:, start:end] *= (1 - freq_weight  * decay_factor)
        # Inverse DCT to transform back to the time domain
        return dct.idct(x_dct, norm='ortho')

\end{lstlisting}
\captionof{figure}{PyTorch codes for frequency enhancement in the encoder.}
\label{code1}
\end{tcolorbox}

\begin{tcolorbox}[
    boxrule=0pt,
    arc=0pt,
    outer arc=0pt,
    left=5mm,
    right=5mm,
    top=5mm,
    bottom=5mm,
    width=\textwidth,
    breakable
]

\begin{lstlisting}[
    language=Python,
    basicstyle=\ttfamily\small,
    keywordstyle=\color{vscodeblue},
    commentstyle=\color{vscodegreen}\itshape,
    stringstyle=\color{vscodepurple},
    numbers=left,
    numberstyle=\tiny\color{vscodegray},
    breaklines=true,
    showstringspaces=false,
    frame=none
]
    def dct_enhance(self, x):
        # Apply DCT to transform input to frequency domain
        x_dct = dct.dct(x, norm='ortho')
        for i in range(self.length_input):
            start = self.split_points[i]
            end = self.split_points[i + 1]
            freq_weight = self.freq_weight[i]
            # Apply learnable weight directly
            x_dct[:, start:end] *= freq_weight
        # Inverse DCT to transform back to time domain
        return dct.idct(x_dct, norm='ortho')

\end{lstlisting}
\captionof{figure}{PyTorch codes for frequency enhancement with pure learnable weights.}
\label{code2}
\end{tcolorbox}

\end{document}